\documentclass{article}

% Recommended, but optional, packages for figures and better typesetting:
\usepackage{graphicx}
% \usepackage{booktabs} % for professional tables

% hyperref makes hyperlinks in the resulting PDF.
% If your build breaks (sometimes temporarily if a hyperlink spans a page)
% please comment out the following usepackage line and replace
% \usepackage{icml2021} with \usepackage[nohyperref]{icml2021} above.
\usepackage{hyperref}
% \usepackage{subfigure}

% Attempt to make hyperref and algorithmic work together better:

% Use the following line for the initial blind version submitted for review:
% \usepackage{neurips_2021}

% to compile a preprint version, e.g., for submission to arXiv, add add the
% [preprint] option:
%     \usepackage[preprint]{neurips_2021}

% to compile a camera-ready version, add the [final] option, e.g.:
\usepackage[final]{neurips_2021}

% to avoid loading the natbib package, add option nonatbib:
%    \usepackage[nonatbib]{neurips_2021}

\title{Shift Invariance Can Reduce Adversarial Robustness}

\author{%
  Vasu Singla, Songwei Ge\thanks{Equal contribution}\\
  Univeristy of Maryland \\
  \texttt{\{vsingla, songweig\}@cs.umd.edu} \\
  \And
  Ronen Basri\\
  Weizmann Institute of Science\\
  \texttt{ronen.basri@weizmann.ac.il}\\
  \And
  David Jacobs\\
  Univeristy of Maryland \\
  \texttt{dwj@cs.umd.edu} \\
}

\usepackage[utf8]{inputenc} % allow utf-8 input
\usepackage[T1]{fontenc}    % use 8-bit T1 fonts
\usepackage{hyperref}       % hyperlinks
\usepackage{url}            % simple URL typesetting
\usepackage{booktabs}       % professional-quality tables
\usepackage{amsfonts}       % blackboard math symbols
\usepackage{nicefrac}       % compact symbols for 1/2, etc.
\usepackage{microtype}      % microtypography
\usepackage{xcolor}         % colors
\usepackage{lipsum}

\usepackage{subcaption}
\usepackage{comment}

% \floatsetup{heightadjust=object}
% Table float box with bottom caption, box width adjusted to content
% \newfloatcommand{capbtabbox}{table}[][\FBwidth]
% Vectors and matrices
% Symbols
\usepackage{amsmath}
\newcommand{\Real}{\mathbb{R}}
\newcommand{\Rd}{\Real^d}

\newcommand{\wone}{\bar{\mathbf{w}}}

\newcommand{\cameraready}[1]{\textcolor{black}{#1}}

\newcommand{\longversion}[1]{#1}
\newcommand{\shortversion}[1]{}
\newcommand{\longversionpw}[1]{#1}
\newcommand{\shortversionpw}[1]{}

\usepackage{amsthm}
\newtheorem{theorem}[]{Theorem}
\newtheorem{lemma}[]{Lemma}

\newtheorem{corollary}{Corollary}[]
\newtheorem{note}{Note}[]
\DeclareMathOperator{\sign}{sign}

\DeclareMathOperator*{\argmin}{arg\,min}

% Vectors and matrices

\newcommand{\vv}{\mathbf{v}}
\newcommand{\w}{\mathbf{w}}
\newcommand{\x}{\mathbf{x}}
\newcommand{\y}{\mathbf{y}}
\newcommand{\z}{\mathbf{z}}

\newcommand{\oned}{\mathbf{1}_d}
\newcommand{\SH}{\cal SH}
\newcommand{\fdcx}{f_{dc}(\mathbf{x})}
\newcommand{\fdcxone}{f_{dc}(\mathbf{x}_1)}
\newcommand{\fdcxtwo}{f_{dc}(\mathbf{x}_2)}
\newcommand{\fdcw}{f_{dc}(\mathbf{w})}

\begin{document}

\maketitle

\begin{abstract}
Shift invariance is a critical property of CNNs that improves performance on classification.  However, we show that invariance to circular shifts can also lead to greater sensitivity to adversarial attacks.  We first characterize the margin between classes when a shift-invariant {\em linear} classifier is used. We show that the margin can only depend on the DC component of the signals.  Then, using results about infinitely wide networks, we show that in some simple cases, fully connected and shift-invariant neural networks produce linear decision boundaries.  Using this, we prove that shift invariance in neural networks produces adversarial examples for the simple case of two classes, each consisting of a single image with a black or white dot on a gray background.  This is more than a curiosity; we show empirically that with real datasets and realistic architectures, shift invariance reduces adversarial robustness.  Finally, we describe initial experiments using synthetic data to probe the source of this connection.
\end{abstract}

\section{Introduction}
\label{sec:intro}
In {\em adversarial attacks} \citep{szegedy2013intriguing} against classifiers, an adversary with knowledge of the trained classifier makes small perturbations to a test image or even to objects in the world \citep{eykholt2018robust,wu2020making} that change the output.  Such attacks threaten the deployment of deep learning systems in many critical applications, from spam filtering to self-driving cars.  

Despite a great deal of study, it remains unclear why neural networks are so susceptible to adversarial attacks.  We show that invariance to circular shifts in Convolutional Neural Networks (CNNs) can be one cause of this lack of robustness.  All reference to shifts will refer to circular shifts.

To motivate this conclusion we study in detail a simple example.  Indeed, one of our contributions is to present perhaps the simplest possible example in which adversarial attacks can occur.  Figure \ref{fig:simple_exp} shows a two class classification problem in which each class consists of a single image, a white or black dot on a gray background.  We train either a fully connected (FC) network or a CNN that is designed to be fully shift-invariant to distinguish between them.  Since each class contains only a single image, we measure adversarial robustness as the $l_2$ distance to an adversarial example produced by a DDN attack \citep{rony2019decoupling}, using the training image as the starting point.  The figure shows that the CNN is much less robust than the FC network, and that the robustness of the CNN drops precipitously with the image size.

In Sections \ref{sec:linear} and \ref{sec:ntk} we explain this result theoretically.  In Section \ref{sec:linear} we study the effect of shift invariance on the margin of linear classifiers, for linearly separable data.  We call a linear classifier shift invariant when it places all possible shifts of a signal in the same class. We prove that for a shift invariant linear classifier the margin will depend only on differences in the DC (constant) components of the training signals.  It follows that for the two classes shown in Figure \ref{fig:simple_exp}, the margin of a linear, shift invariant classifier will shrink in proportion to $\frac{1}{\sqrt{d}}$, where $d$ is the number of image pixels.  

\begin{figure}[t]
    \vspace{-1mm}
    \centering
    \includegraphics[width=\linewidth]{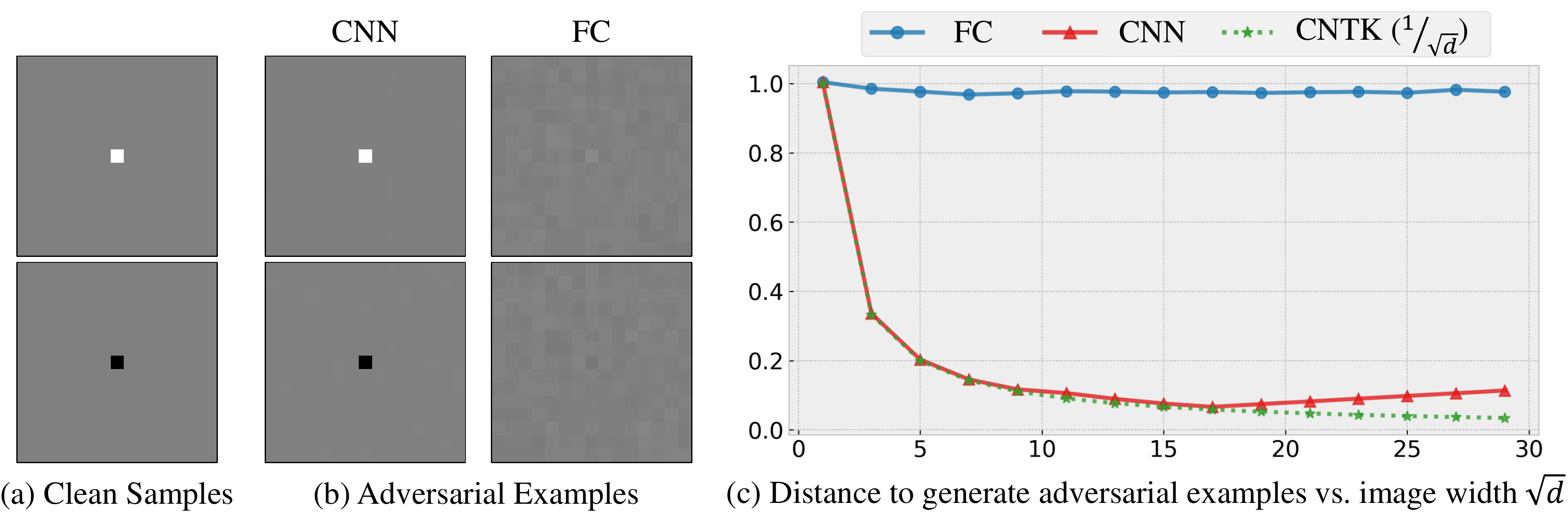}
    \caption{In a binary classification problem, with each image as a distinct class (a), a FC network requires an average L2 distance close to 1 to construct an adversarial example while a shift-invariant CNN only requires approximately \textbf{$\frac{1}{\sqrt{d}}$}, where $d$ is the input dimension. (b)-left shows adversarial examples for the CNN.  On the top, for example, the adversarial example looks just like the clean sample for the white dot class, but is classified as belonging to the black dot class.  (b)-right shows that "adversarial" examples for the FC network just look like gray images.   (c) shows accuracy under an adversarial attack, where the L2 norm of the allowed adversarial perturbation is shown in the horizontal axis. The curve labeled CNN shows results for a real trained network, while CNTK shows the theoretically derived prediction of \textbf{$\frac{1}{\sqrt{d}}$} for an infinitely wide network. }
    \label{fig:simple_exp}
    \vspace{-0.5cm}
\end{figure}

Next, in Section \ref{sec:ntk}, we prove that under certain assumptions, a CNN whose architecture makes it shift invariant produces a linear decision boundary for the example in Figure \ref{fig:simple_exp}, explaining its lack of adversarial robustness.  We draw on recent work that characterizes infinitely wide neural networks as kernel methods, describing FC networks with the neural tangent kernel (NTK) \citep{jacot2018neural} and CNNs with a convolutional neural tangent kernel (CNTK) \citep{arora2019exact,li2019enhanced}. Our proof uses these kernels, and applies when the assumptions of this prior work hold.  However, the results in Figure \ref{fig:simple_exp} are produced with real networks that do not satisfy these assumptions, suggesting that our results are more general.

We feel that it is valuable to produce such a simple example in which adversarial vulnerability provably occurs and can be fully understood.  Still, it is reasonable to question whether shift invariance can affect robustness in networks trained on real data.  In Section \ref{sec:experiments} we show experimentally that it can. 

First, we compare the robustness of FC and shift-invariant networks on MNIST and Fashion-MNIST.  On these smaller datasets, FC networks attain reasonably good test accuracy on clean data.  We find that indeed, FC networks are also quite a bit more robust than shift-invariant CNNs. {We also show that if we reduce the degree of shift invariance of a CNN, we increase its adversarial robustness on these datasets.} Next we show that on the SVHN dataset, FC networks are more robust than ResNet architectures suggesting that this phenomenon extends to realistic convolutional networks.  Finally, we consider  industrial strength networks trained on CIFAR-10 and ImageNet.  We note that ResNet and VGG are shift-invariant to a degree, while AlexNet\cameraready{, FCNs} {and transformer based networks are} much less so.  This suggests that ResNet and VGG will also be less robust than AlexNet, \cameraready{FCNs, and} transformer networks, which we confirm experimentally. 

% We also compare Alexnet and Resnet architectures on SVHN, again finding Alexnet more robust.

%\vsing{On SVHN we don't use shift invariant networks, but Resnets which are compared to fully connected networks.}

Finally, we use simple synthetic data to explore the connection between shift invariance and robustness.  We show that when shift invariance effectively increases the dimensionality of the training data, it produces less robustness, and that this does not occur when shift invariance does not increase dimensionality.  This is in line with recent studies that connect higher dimensionality to reduced robustness.   However, we also show, at least in a simple example, that higher dimensionality only affects robustness when it also affects the margin of the data.  

{\bf The main contribution of our paper is to show theoretically that shift invariance can undermine adversarial robustness, and to show experimentally that this is indeed the case on real-world networks.}. {This enhances our understanding of robustness, and suggests that we can improve robustness by judiciously reducing invariances that are not needed for good performance.}  \cameraready{Our code
can be found at \url{https://github.com/SongweiGe/shift-invariance-adv-robustness}.}

\section{Related Work}
\label{sec:related}
In this section we survey prior work on the origin of adversarial examples and briefly discuss work on shift-invariance in CNNs, and on the NTKs. %\cameraready{We could maybe shift to long-version of related works, now that we have the 10 page real-estate?}

\longversion{
\subsection{The Cause of Adversarial Examples}}

%Adversarial examples of neural networks have drawn great attention, but their cause has not been fully explained. \longversion{Many studies have tried to explain this phenomenon and proposed different conditions that can lead to adversarial vulnerability. In this section, we summarize these works from the perspectives of data, model and the others.

One common explanation for adversarial examples lies in the curse of dimensionality.  
 \cite{goodfellow2014explaining} argues that even with a linear classifier a lack of adversarial robustness can be explained by high input dimension. \longversionpw{They suggest that large gradients naturally occur as the input dimension increases, assuming that the magnitude of weights remains constant.} %(Note that for linear classifiers, even though large gradients give rise to adversarial examples, classification is robust to random perturbations, with high probability\footnote{This follows directly from well-known properties of high dimensional spaces, but we include a proof of this fact in the supplementary material, for completeness.}. 
 \shortversionpw{\citep{pmlr-v97-simon-gabriel19a} expands on this by studying how gradient magnitude grows with dimension.}\longversionpw{
 Following this, \citep{pmlr-v97-simon-gabriel19a} provides theoretical and empirical evidence that gradient magnitude in neural networks grows as $\mathcal{O}(\sqrt{d})$ instead of $\mathcal{O}(d)$, where $d$ is the input dimension.}
 \shortversionpw{\citep{gilmer2018adversarial}, \citep{shafahi2018adversarial}, \citep{daniely2020relu} and \citep{khoury2018geometry} provide additional insight into the role of dimension in adversarial examples, while \citep{shamir2019simple} shows that for ReLU networks, adversarial examples arise as a consequence of the geometry of $\mathbb{R}^{n}$.} \longversionpw{
 \cite{gilmer2018adversarial} analyzes the case of two classes consisting of concentric spheres, and shows that adversarial examples arise when the spheres are of high dimension. \cite{shafahi2018adversarial} provides a general discussion of how high dimensional data can lead to adversarial examples can be inevitable by showing that a small perturbation in a high dimensional space can cover the whole space. \cite{khoury2018geometry} argues theoretically and empirically that adversarial examples arise when data lies on a lower dimensional manifold in a high dimensional space. }

\shortversionpw{Some studies show that adversarial examples could arise due to properties of the data, such as high frequency components \citep{wang2020high} or biases in the features \citep{tsipras2018robustness}.  \citep{schmidt2018adversarially} shows that in some cases robustness requires larger sample complexity, while \citep{chen2020more} shows that in some cases more data can undermine robustness.}

\longversionpw{
Other studies show that adversarial examples could arise due to properties of the data. \cite{wang2020high} argues that adversarial examples can be explained by the dependence of the model on the high frequency components of the image data. \cite{tsipras2018robustness} shows that to achieve high accuracy one must sacrifice robustness due to biases in the features. Follow-up work~\citep{schmidt2018adversarially} shows that the sample complexity requirements for an adversarially robust classifier to achieve good generalization are much higher than for a normal classifier. Furthermore, a recent study~\citep{chen2020more} shows that in some cases more data actually undermines adversarial robustness and also provides a more complete picture of this line of research.}

\shortversionpw{Other work suggests that  susceptibility to adversarial attack may be due to aspects of the model, such as insufficient complexity \citep{nakkiran2019adversarial}, width \citep{madry2018towards,wu2020does} or batch normalization\citep{galloway2019batch}. \citep{shah2020pitfalls} shows that in many cases networks learn simple decision boundaries with small margins, rather than more complex ones with large margins, inducing adversarial vulnerability.
}

\longversionpw{
Apart from the data, some papers look for explanations in the  model. \cite{nakkiran2019adversarial} conjectures that the reason adversarial robustness is hard to achieve is that current classifiers are still not complex enough. \cite{shah2020pitfalls} shows that in many cases networks learn simple decision boundaries, with small margins, rather than more complex ones with large margins, which induces adversarial vulnerability.  \cite{daniely2020relu} shows that for most ReLU networks with random weights, most samples $x$ admit an adversarial perturbation at distance $O(\frac{\|x\|}{d})$, where $d$ is the input dimension. \cite{shamir2019simple} shows that for ReLU networks, adversarial examples arise as a consequence of the geometry of $\mathbb{R}^{n}$ and show they can find targeted adversarial examples with $m$ $l_0$ distance, when networks are designed to distinguish between $m$ classes. \cite{galloway2019batch} provides empirical evidence that using batch normalization can reduce adversarial robustness.  \cite{madry2018towards} and \cite{wu2020does} have discussed the relationship between robustness and network width.  \cite{kamath2020invariance} examine the interplay between rotation invariance and robustness. They empirically show that by inducing more rotational invariance using data augmentation in CNNs and Group-equivariant CNNs the adversarial robustness of these models decreases.  This work is especially relevant to our results that shift invariance can reduce robustness (A quite different notion of invariance is also explored in \cite{jacobsen2018excessive} and subsequent work).}

\shortversionpw{
Perhaps most relevant to our work, \citep{kamath2020invariance} examine the interplay between rotation invariance and robustness. They empirically show that when they induce more rotational invariance using data augmentation in CNNs and Group-equivariant CNNs~\citep{cohen2016group}  adversarial robustness decreases.    A quite different notion of invariance is explored in \citep{jacobsen2018excessive}.}

\shortversion{
Several papers have pointed out that the modern CNN architectures are not fully shift invariant for reasons such as padding mode, striding, and padding size (e.g.~\citep{alsallakh2020mind}). \cite{azulay2018deep} relates a lack of shift invariance to a failure to respect the sampling theorem.  Several papers have suggested methods for enhancing shift invariance, including \cite{zhang2019making}, \cite{kayhan2020translation}, and \cite{Chaman_2021_CVPR}.
}

\longversion{
\subsection{CNNs and Shift-Invariance}
Several papers have pointed out that the modern CNN architectures are not fully shift invariant for reasons such as padding mode, striding, and padding size. \citep{zhang2019making} improves the local shift invariance to global invariance with the idea of ``antialiasing''.
%~\footnote{There is a video on this paper at \url{https://www.youtube.com/watch?t=1049&v=8CXrplBG-SE&feature=youtu.be}}.
\citep{kayhan2020translation} proposes to undermine the ability of CNNs to exploit the absolute location bias by using full convolution. \citep{Chaman_2021_CVPR} proposes the use of a novel sub-sampling scheme to remove the effects of down-sampling caused by strides and achieve perfect shift invariance. \citep{alsallakh2021mind} show padding mechanism can cause spatial bias in CNNs. \citep{azulay2018deep} relates a lack of shift invariance to a failure to respect the sampling theorem.}

\shortversion{
Some of our theoretical results draw on recent work on the NTK~\citep{jacot2018neural} and CNTKs~\citep{arora2019exact,li2019enhanced}.  
These and other recent studies have shown that when networks are over-parameterized, their parameters stay close to their initialization during training, allowing a linearization of the network that makes the analysis of convergence and generalization tractable~\citep{allen2019learning, allen2019convergence, du2018gradient, du2019gradient}. When optimizing the quadratic loss using gradient descent, the dynamics of an infinitely wide neural network can be described by kernel regression. We describe these kernels in more detail in Section \ref{sec:ntk} and the supplemental material.
 }

\longversion{
\subsection{Neural Tangent Kernel}
Neural networks are usually highly non-convex. But recent studies have shown that when the networks are over-parameterized, their parameters stay close to the initialization during the training, which makes the analysis of the convergence and generalization of neural networks tractable~\citep{allen2019learning, allen2019convergence, du2018gradient, du2019gradient}. When optimizing the quadratic loss using gradient descent, the dynamics of an infinitely wide neural network can be well described by a kernel regression, using the {\em Neural Tangent Kernel} (NTK)~\citep{jacot2018neural}.   This kernel only depends on the network architecture and initialization. Subsequent studies extend NTK beyond the fully connected network to various well-known architectures~\citep{yang2020tensor} including using the CNTK kernel for Convolutional Neural Networks~\citep{arora2019exact,li2019enhanced}, the GNTK for Graph Neural Networks~\citep{du2019graph}, and the RNTK for Recurrent Neural Networks~\citep{alemohammad2021the}. Earlier works with these kernels have shown a superior performance over traditional kernels~\citep{Arora2020Harnessing} while later theoretical and empirical studies show that NTKs are closely related to traditional kernels~\citep{geifman2020similarity,chen2021deep}.

The salient advantage of NTK is its ability as a tool to deliver tractable analysis of deep neural networks. Many interesting studies have been done to demystify the puzzling behaviors of deep neural networks using this tool. \citep{ronen2019convergence,rahaman2019spectral,basri2020frequency} study the inductive bias of the neural network  in terms of the frequencies of the learned function. \citep{tancik2020fourfeat} further apply these discoveries and random fourier features to empower neural networks to learn high-frequency signals more easily. \citep{huang2020deep} uses NTK to understand the importance of the residual connection proposed in ResNet~\citep{he2016deep}. In this paper, we use NTK and CNTK to help us understand the behavior of FC and Convolutional Neural Networks in toy examples and draw a connection between CNN and adversarial examples.
}

\section{Shift invariance and the linear margin}
\label{sec:linear}
%\sg{The notation of DC component is borrorwed from \url{https://www.sciencedirect.com/topics/engineering/dc-component}}

In this section we consider what happens when a linear classifier is required to be shift invariant.  That is, we suppose that any circularly shifted version of a training example should be labeled the same as the original sample.   
For convenience of notation we consider classes of 1D signals of length $d$, in which $\mathbf{x} = (x_1, x_2, ... x_d)$.  Our results are easily extended to signals of any dimension.  We denote the set of training samples with label $+1$ by $X_1$, and the set of samples with label $-1$ by $X_2$.  Let $\mathbf{x}^s$ denote the signal $\mathbf{x}$ shifted by $s$, with $0 \le s \le d-1$.  That is:
$
\mathbf{x}^s = (x_{s+1}, x_{s+2}, ... x_d, x_1, ... x_s)
$.
For a signal $\mathbf{x}$, let ${\SH}(\mathbf{x})$ denote the set containing every shifted version of $\mathbf{x}$.  That is $\mathbf{x}^s \in {\SH}(\mathbf{x}),~\forall s, 0 \le s \le d-1$.  Let $S_i = \cup_{\mathbf{x} \in X_i} {\SH}(\mathbf{x})$ denote the set of all shifted versions of all signals in the set $X_i$. Further, let $f_{dc}(\mathbf{x})$ denote the DC component of a signal.  That is,
\[
f_{dc}(\mathbf{x}) = \frac{1}{\sqrt{d}}\sum_{i=1}^d {x}_i
\]
We denote 
$
\wone = \frac{1}{\sqrt{d}}\oned.
$ where $\oned$ is a $d$-dimensional vector of all ones, so $
f_{dc}(\mathbf{x}) = \wone^T \mathbf{x}$.  We prove the following theorem:
\begin{theorem}
\label{theorem:linear_separability}
Let $S_1$ and $S_2$ denote the sets of all shifts of $X_1$ and $X_2$, as described above. They  are linearly  separable if and only if $\max_{\mathbf{x_1} \in S_1 } f_{dc}(\mathbf{x_1}) < \min_{\mathbf{x_2} \in S_2 } f_{dc}(\mathbf{x_2})$ or $\max_{\mathbf{x_2} \in S_2 } f_{dc}(\mathbf{x_2}) < \min_{\mathbf{x_1} \in S_1 } f_{dc}(\mathbf{x_1})$. Furthermore, if the two classes are linearly separable then, if the first inequality holds, the margin is $\min_{\mathbf{x_2} \in S_2 } \fdcxtwo - \max_{\mathbf{x_1} \in S_1 } \fdcxone$, and similarly if the second inequality holds.  Furthermore, the max margin separating hyperplane has a normal of $\wone$.

\end{theorem}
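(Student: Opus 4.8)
\emph{Setup of the plan.} The whole argument rests on one identity: the DC functional is shift invariant, so $\fdcx$ is constant on every orbit $\SH(\x)$, and the average over an orbit collapses onto the DC direction. Concretely, a direct index computation (as $s$ ranges over $0,\dots,d-1$, each coordinate of $\x^s$ runs through all of $x_1,\dots,x_d$) gives $\frac1d\sum_{s=0}^{d-1}\x^s=\fdcx\,\wone$, and recall $\norm{\wone}=1$. The plan is to use this to reduce linear separability of $S_1,S_2$ (which live in $\Real^d$) to separability of the scalar DC values, and then to optimize the margin over hyperplane directions by the same averaging trick combined with Cauchy--Schwarz.

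\emph{Equivalence.} ($\Rightarrow$) Suppose $(\w,b)$ strictly separates $S_1$ and $S_2$, with $\w^T\x+b>0$ on $S_1$ and $\w^T\x+b<0$ on $S_2$. Fix $\x\in X_1$; since $S_1$ is shift-closed, $\w^T\x^s+b>0$ for every $s$, so averaging over $s$ and applying the orbit-average identity gives $c\,\fdcx+b>0$, where $c:=\w^T\wone$; similarly $c\,\fdcx+b<0$ for every $\x\in X_2$. Hence the affine map $t\mapsto ct+b$ strictly separates the finite real sets $\{\,\fdcx:\x\in X_1\,\}$ and $\{\,\fdcx:\x\in X_2\,\}$. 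This forces $c\neq0$ (otherwise $b$ would be simultaneously positive and negative), so the map is strictly monotone, and a monotone affine map strictly separates two finite sets of reals precisely when one set lies entirely below the other. Since $\max_{\x\in S_i}\fdcx=\max_{\x\in X_i}\fdcx$ and likewise for the minima, this is exactly the dichotomy in the statement. ($\Leftarrow$) Assume without loss of generality $\max_{\x_1\in S_1}\fdcxone<\min_{\x_2\in S_2}\fdcxtwo$. Take $\w=-\wone$ (the sign is fixed only by the $\pm1$ labeling) and any $b$ in the nonempty interval $(\max_{\x_1\in S_1}\fdcxone,\ \min_{\x_2\in S_2}\fdcxtwo)$; then $\w^T\x+b=b-\fdcx$ is positive on $S_1$ and negative on $S_2$, so the sets are linearly separable.

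\emph{Margin.} For a unit vector $\w$, let $\Gamma(\w):=\min_{\x_2\in S_2}\w^T\x_2-\max_{\x_1\in S_1}\w^T\x_1$ be the width of the slab it separates; the maximum margin is $\max_{\norm{\w}=1}\Gamma(\w)$ and the optimal hyperplane bisects the corresponding slab. For any $\x$, $\min_s\w^T\x^s\le\frac1d\sum_s\w^T\x^s=c\,\fdcx\le\max_s\w^T\x^s$ with $c=\w^T\wone$, hence (working in the first-inequality case) $\Gamma(\w)\le c\,(\min_{\x_2\in X_2}\fdcxtwo)-c\,(\max_{\x_1\in X_1}\fdcxone)$ if $c\ge0$, and $\Gamma(\w)\le c\,(\max_{\x_2\in X_2}\fdcxtwo)-c\,(\min_{\x_1\in X_1}\fdcxone)<0$ if $c<0$ (so directions with $\w^T\wone<0$ do not even separate in this orientation). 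Since the bracket $\min_{\x_2\in X_2}\fdcxtwo-\max_{\x_1\in X_1}\fdcxone$ is positive and $\abs{c}\le\norm{\w}\,\norm{\wone}=1$ by Cauchy--Schwarz, we conclude $\Gamma(\w)\le\min_{\x_2\in S_2}\fdcxtwo-\max_{\x_1\in S_1}\fdcxone=\Gamma(\wone)$, with equality forcing $c=1$ and hence $\w=\wone$. So the maximum margin equals $\min_{\x_2\in S_2}\fdcxtwo-\max_{\x_1\in S_1}\fdcxone$ (and symmetrically under the other inequality), attained only by the hyperplane whose normal is $\wone$.

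\emph{Expected obstacle.} I expect the only genuine obstacle to be the margin step, and only a mild one: one must track the sign of $c=\w^T\wone$ (the estimate splits into the $c\ge0$ and $c<0$ cases, the latter ruling out the wrong separating orientation) and fix the convention that the ``margin'' denotes the full slab width. The two separability directions and the explicit construction are routine bookkeeping around the orbit-average identity.
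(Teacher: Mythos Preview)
Your proposal is correct and follows essentially the same route as the paper's proof: both hinge on the orbit-average identity $\frac{1}{d}\sum_{s}\w^T\x^s=\fdcw\,\fdcx$ (your $c=\w^T\wone$ is exactly $\fdcw$), use it to reduce separability to separability of the scalar DC values, and then bound the margin via $|\fdcw|\le1$ (you phrase this as Cauchy--Schwarz, the paper just notes that $\fdcw$ is maximized at $\wone$). Your case split on the sign of $c$ is a bit more careful than the paper's WLOG assumption $\fdcw>0$, and your uniqueness remark is extra, but the core argument is the same.
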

{This theorem shows that a shift invariant linear classifier can only use the DC components of signals to separate them.  The proof of this theorem is in the supplemental material.  Here we will give an intuitive explanation for this result.  For simplicity, we do not consider the bias term here.}

{When the two classes are linearly separable, this means WLOG that there exists a unit vector $\w$ such that $\w \cdot \mathbf{x_1^{s_1}}  < \w \cdot \mathbf{x_2^{s_2}}, \forall \mathbf{x_1} \in X_1, \mathbf{x_2} \in X_2$ and any shifts $s_1$ and $s_2$.  Consider the margin that arises when we consider just the sets ${\SH}(\mathbf{x_1})$ and ${\SH}(\mathbf{x_2})$.  First, consider what happens when we compute $\w \cdot \mathbf{x_1}^s$ for all shifts, $s$.  When $\w = \wone$, this inner product is constant and equal to $f_{dc}(\mathbf{x_1})$. Because $\wone$ is a constant vector, shifting $\mathbf{x_1}$ has no effect on the inner product with $\wone$.  Similarly, $\wone \cdot \mathbf{x}_2^s = f_{dc}(\mathbf{x_2})$ for all shifts.  So the margin between ${\SH}(\mathbf{x_1})$ and ${\SH}(\mathbf{x_2})$ will be $f_{dc}(\mathbf{x_2}) - f_{dc}(\mathbf{x_1})$.  

For any other choice of $\w$,  $\w \cdot \mathbf{x_1}^s$ will not generally be constant as $\mathbf{x_1}$ shifts.   We can show that the average value of $\w \cdot \mathbf{x_1}^s$ over all shifts, $s$, will be $f_{dc}(\mathbf{x_1})f_{dc}(\mathbf{w})$. In this case, the margin between ${\SH}(\mathbf{x_1})$ and ${\SH}(\mathbf{x_2})$ must be no larger than the difference between the average values of $\w \cdot \mathbf{x_1}^s$ and $\w \cdot \mathbf{x_2}^s$, which is $f_{dc}(\mathbf{x_2})f_{dc}(\mathbf{w}) - f_{dc}(\mathbf{x_1})f_{dc}(\mathbf{w})$.  Note that since $\w$ is a unit vector, $f_{dc}(\mathbf{w}) \le 1$, and equals 1 for $\w = \wone$.  Therefore, $\wone$ maximizes the margin for two reasons.  First, the margin is scaled by $f_{dc}(\mathbf{w})$, which is maximized for $\w = \wone$.  Second the values of $\w \cdot \mathbf{x_1}^s$ are constant for all choices of $s$ when $\w = \wone$, but can vary for other choices of $\w$.  This variation can only reduce the margin.   }

We now consider the example shown in Figure \ref{fig:simple_exp}.  Let $\mathbf{x}_1$ denote an image consisting of a single 1 in a background of 0s, and let $\mathbf{x}_2$ denote an image containing a single -1 with 0s. Since $\fdcxone = \frac{1}{\sqrt{d}}$ and $ \fdcxtwo = -\frac{1}{\sqrt{d}}$, if we let $X_1 = \{\mathbf{x}_1\}$ and $X_2 = \{\mathbf{x}_2\}$ it is straightforward to show:
\begin{corollary}
$X_1$ and $X_2$ defined above are linearly separable with a max margin of 2. ${\SH}(X_1)$ and ${\SH}(X_2)$ are linearly separable with a max margin of $\frac{2}{\sqrt{d}}$ and a separating hyperplane with the normal vector $\wone$.
\end{corollary}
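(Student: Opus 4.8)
The plan is to treat the two assertions separately, the second being a direct instantiation of Theorem~\ref{theorem:linear_separability}. For the unshifted claim, the key point is that each class is a single point, so separating $X_1=\{\mathbf{x}_1\}$ from $X_2=\{\mathbf{x}_2\}$ is the classical two-point max-margin problem: for any unit normal $\mathbf{w}$ that puts $\mathbf{x}_1$ on the positive side, the width of the separating slab is at most $\mathbf{w}^T\mathbf{x}_1-\mathbf{w}^T\mathbf{x}_2=\mathbf{w}^T(\mathbf{x}_1-\mathbf{x}_2)\le\norm{\mathbf{x}_1-\mathbf{x}_2}$, with equality when $\mathbf{w}=(\mathbf{x}_1-\mathbf{x}_2)/\norm{\mathbf{x}_1-\mathbf{x}_2}$. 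Taking $\mathbf{x}_1$ and $\mathbf{x}_2$ to be the white- and black-dot images with the dot in the same pixel $k$, i.e.\ $\mathbf{x}_1-\mathbf{x}_2=2\mathbf{e}_k$, this gives a max margin of $2$ (and, since the midpoint of the two points is the origin, the separator needs no bias).

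For the shifted claim, I would first record that $f_{dc}$ is unchanged by circular shifts, because it is the inner product with the constant vector $\wone$: hence $f_{dc}(\mathbf{x}_1^s)=f_{dc}(\mathbf{x}_1)=\frac{1}{\sqrt d}$ for every $s$, and $f_{dc}(\mathbf{x}_2^s)=f_{dc}(\mathbf{x}_2)=-\frac{1}{\sqrt d}$ for every $s$. Therefore $\max_{\mathbf{x}_2\in{\SH}(X_2)}f_{dc}(\mathbf{x}_2)=-\frac{1}{\sqrt d}<\frac{1}{\sqrt d}=\min_{\mathbf{x}_1\in{\SH}(X_1)}f_{dc}(\mathbf{x}_1)$, which is exactly the second inequality in Theorem~\ref{theorem:linear_separability}. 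Applying that theorem immediately yields that ${\SH}(X_1)$ and ${\SH}(X_2)$ are linearly separable, that the margin equals $\min_{\mathbf{x}_1}f_{dc}(\mathbf{x}_1)-\max_{\mathbf{x}_2}f_{dc}(\mathbf{x}_2)=\frac{2}{\sqrt d}$, and that the max-margin hyperplane has normal $\wone$.

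There is no genuinely hard step here; the corollary is a worked example, and the only care needed is over conventions. I would state explicitly that ``margin'' means the full width of the slab between the two supporting hyperplanes, so that the factors of $2$ in both parts match the wording of Theorem~\ref{theorem:linear_separability}. I would also flag the modeling assumption that the two dots occupy the same pixel, which is what makes the unshifted margin exactly $2$ rather than $\sqrt2$; loosening this to arbitrary distinct dot locations changes the unshifted margin but leaves the shifted margin $\frac{2}{\sqrt d}$ untouched, since by Theorem~\ref{theorem:linear_separability} it depends only on the DC components. The entire content of the corollary is the contrast between an $O(1)$ unshifted margin and a $\Theta(1/\sqrt d)$ shifted margin, and this falls out of Theorem~\ref{theorem:linear_separability} with essentially no additional computation.
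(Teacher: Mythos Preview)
Your proposal is correct and matches the paper's (implicit) argument: the paper gives no proof beyond recording $\fdcxone=\frac{1}{\sqrt d}$ and $\fdcxtwo=-\frac{1}{\sqrt d}$ and declaring the corollary ``straightforward to show,'' which is exactly your second-claim computation via Theorem~\ref{theorem:linear_separability}. Your treatment of the first claim as the two-point max-margin problem, together with the explicit flag that the dots must occupy the same pixel (equivalently $\mathbf{x}_2=-\mathbf{x}_1$, consistent with the antipodal setup in Section~\ref{sec:ntk}) for the unshifted margin to equal $2$ rather than $\sqrt 2$, is a useful clarification that the paper leaves implicit.
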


%We can see that $w = \frac{1}{\sqrt{d}}(1,1,...1)$ is a normal vector of this linear classifier.

\section{Shift invariance and adversarial attacks: NTK vs.\ CNTK}
\label{sec:ntk}

To understand the robustness of shift invariant neural networks we examine the behavior of the neural tangent kernel (NTK) for two networks with simple inputs. It has been shown that neural networks with infinite width (or convolutional networks with infinite number of channels) behave like kernel regression with the family of kernels called NTKs~\citep{jacot2018neural}. Here we consider the NTK for a two-layer, bias-free fully connected network, denoted FC-NTK, and for a two-layer bias-free convolutional network with global average pooling, denoted CNTK-GAP~\citep{li2019enhanced}.

In our theorem below we use both FC-NTK and CNTK-GAP to construct two-class classifiers with a training set composed of two antipodal training points, $\x,-\x \in \Rd$. In both cases the kernels produce linear separators, but while FC-NTK produces a separator with constant margin, independent of input dimension, due to shift invariance CNTK-GAP results in a margin that decreases with $2/\sqrt{d}$, consistent with our results in Figure~\ref{fig:simple_exp}. Due to space considerations, the definition of NTK, CNTK and their corresponding network architectures, the minimum norm interpolant $g_k(\z)$ of the kernel regression with kernel $k$, along with the proof of the theorem below are deferred to Section 8 in the Appendix.

\begin{theorem} \label{thm:cntk}
Let $\x,-\x \in \Rd$ be two training vectors with class labels $1,-1$ respectively. 
\begin{enumerate}
    \item \label{thm:ntk} Let $k(\z,\x)$ denote NTK for the bias-free, two-layer fully connected network. Then $\forall \z \in \Rd$, the minimum norm interpolant $g_k(\z) \ge 0$ iff $\z^T\x \ge 0$.
    \item Let $K(\z,\x)$ denote CNTK-GAP for the bias-free, two-layer convolutional network
    % defined in \eqref{eq:cntk-gap}
    , and assume $H_K$ is invertible. Then $\forall \z \in \Rd$, either $g_K(\z) \ge 0$ iff $\z^T \oned \ge 0$ or $g_K(\z) \ge 0$ iff $\z^T\oned \le 0$.
    %$K(z,x) \ge K(z,-x)$ for all $z \in \mathbb{R}^d$ for which $(z^T1_d)(x^T1_d) \ge 0$. 
    (I.e., $\z^T\oned=0$ forms a separating hyperplane.) 
\end{enumerate}
\end{theorem}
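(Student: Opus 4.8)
The plan is to reduce both parts to a single algebraic identity for the two-layer ReLU NTK and then exploit the very special structure of an antipodal two-point training set.

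\textbf{The FC-NTK part.} First I would use the closed form of the NTK $k$ for the bias-free two-layer ReLU network, which is a dot-product kernel: up to a fixed positive constant, $k(\uu,\vv)=\frac{1}{2\pi}\norm{\uu}\,\norm{\vv}\bigl(\sin\theta+2(\pi-\theta)\cos\theta\bigr)$, where $\theta=\theta(\uu,\vv)$ is the angle between $\uu$ and $\vv$ --- the degree-$1$ (NNGP) arc-cosine kernel from the output layer plus $\uu^T\vv$ times the degree-$0$ arc-cosine kernel from the hidden layer. The one fact I need from this is that $k(\uu,\vv)-k(\uu,-\vv)$ is a fixed positive multiple of $\uu^T\vv$ (with the standard normalization, exactly $\uu^T\vv$): since the angle to $-\vv$ is $\pi-\theta$, substituting $\theta$ and $\pi-\theta$ and simplifying, the degree-$1$ pieces cancel and the remainder collapses to $\norm{\uu}\,\norm{\vv}\cos\theta$. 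Now use the two-point structure: for $\{\x,-\x\}$ with labels $(1,-1)^T$, the Gram matrix $H$ has $a=k(\x,\x)$ on the diagonal and $b=k(\x,-\x)$ off the diagonal, so $H^{-1}(1,-1)^T=\tfrac{1}{a-b}(1,-1)^T$ and the minimum norm interpolant is
\[
g_k(\z)=\tfrac{1}{a-b}\bigl(k(\z,\x)-k(\z,-\x)\bigr)=\tfrac{\z^T\x}{\norm{\x}^2},
\]
using the identity in the numerator and $a-b=\norm{\x}^2>0$ (same identity with $\uu=\vv=\x$, or positive definiteness). Hence $\sign g_k(\z)=\sign(\z^T\x)$, which is part~1.

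\textbf{The CNTK-GAP part.} Here the key structural fact is that global average pooling turns the kernel into an average of patch kernels over \emph{all} pairs of spatial locations: $K(\uu,\vv)=\frac{1}{d^2}\sum_{p,q}k(\uu^{(p)},\vv^{(q)})$, where $\uu^{(p)}$ is the $p$-th patch and the per-patch NTK is exactly the FC-NTK $k$ (for circular convolution with full-width filters each patch is a circular shift $\uu^p$; for shorter filters the argument below is unchanged). Repeating the two-point computation gives $g_K(\z)=\tfrac{1}{A-B}\bigl(K(\z,\x)-K(\z,-\x)\bigr)$ with $A=K(\x,\x)$, $B=K(\x,-\x)$, and applying the antipodal identity termwise together with $\sum_p\z^{(p)}=\sqrt{d}\,f_{dc}(\z)\,\oned$ (each coordinate of $\z$ appears in the same number of patches) yields
\[
K(\z,\x)-K(\z,-\x)=\frac{1}{d^2}\Bigl(\sum_p\z^{(p)}\Bigr)^{\!T}\Bigl(\sum_q\x^{(q)}\Bigr)\ \propto\ f_{dc}(\z)\,\fdcx .
\]
In particular $A-B$ is a positive multiple of $\fdcx^{\,2}$, so invertibility of $H_K$ forces $\fdcx\neq0$, whence $g_K(\z)=f_{dc}(\z)/\fdcx$ (the positive constant cancels). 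Its sign equals that of $\z^T\oned$ when $\fdcx>0$ and the opposite when $\fdcx<0$, which is exactly part~2, with $\z^T\oned=0$ as the separating hyperplane.

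\textbf{The main obstacle.} The linear algebra is essentially trivial (the $2\times2$ Gram matrix), so the real work is twofold: first, pinning down the exact closed form and normalization of CNTK-GAP so the reduction $K(\uu,\vv)=\frac{1}{d^2}\sum_{p,q}k(\uu^{(p)},\vv^{(q)})$ is rigorously justified --- in particular, verifying that GAP yields a double sum over \emph{all} location pairs (not just matched ones) and that the per-location kernel is genuinely the two-layer FC-NTK, so that the termwise application of the antipodal identity is legitimate; and second, the bookkeeping in the antipodal identity itself, where one must separately track the degree-$0$ (hidden-layer) and degree-$1$ (output-layer) arc-cosine contributions and use that the angle to $-\vv$ is $\pi-\theta$ in the right places. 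The degenerate cases ($\z=\mathbf{0}$, $\z$ parallel to $\x$, and $\fdcx=0$) are routine: the first two are immediate and the last is excluded by the invertibility hypothesis.
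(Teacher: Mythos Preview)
Your proposal is correct and follows essentially the same route as the paper: derive the antipodal identity $k(\z,\x)-k(\z,-\x)=c\,\z^T\x$ from the closed-form two-layer ReLU NTK, exploit the $2\times 2$ Gram matrix (the paper isolates this as a separate lemma) to get $g_k(\z)\propto k(\z,\x)-k(\z,-\x)$, and for CNTK-GAP apply the identity termwise in the double sum $K(\z,\x)=\frac{1}{d^2}\sum_{i,j}k(\bar\z_i,\bar\x_j)$ and collapse $\sum_i\bar\z_i$ to a constant vector proportional to $\z^T\oned$. Your extra remark that invertibility of $H_K$ forces $f_{dc}(\x)\neq 0$ (since $A-B$ is a nonnegative multiple of $f_{dc}(\x)^2$) is a nice touch that the paper handles slightly less explicitly.
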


% \dwj{I don't think $\oned$ is defined} \sg{It is actually mentioned above the thereom 1, maybe I should mention it here again since it is quite far away?}\dwj{No, it's fine.}
The theorem tells us that NTK and CNTK produce linear classifiers.  (\ref{thm:ntk}) tells us that NTK produces a separating hyperplane with a normal vector $\x$, while (\ref{thm:cntk}) says that for CNTK the normal direction is $\oned$. 
% \dwj{I think you could omit this lemma here.}
The following corollary follows directly from Thm~\ref{theorem:linear_separability}, which explains the results in Figure~\ref{fig:simple_exp}.
\begin{corollary}
With a training set composed of an antipodal pair the margin obtained with CNTK is the difference between their DC components.
\end{corollary}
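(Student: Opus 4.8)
The plan is to deduce this directly from Theorem~\ref{theorem:linear_separability}, using the second part of Theorem~\ref{thm:cntk} to pin down the separator that the kernel actually produces. First I would observe that the CNTK-GAP classifier, which by Theorem~\ref{thm:cntk}(2) labels $\z$ by the sign of $\z^T\oned$, is a \emph{shift-invariant} linear classifier in the sense of Section~\ref{sec:linear}: since $\oned$ is constant, $(\z^s)^T\oned=\z^T\oned$ for every circular shift $s$, so every element of ${\SH}(\x)$ gets label $1$ and every element of ${\SH}(-\x)$ gets label $-1$ (assuming $\fdcx=\wone^T\x\neq 0$, which is exactly the condition under which the antipodal pair is separable by a shift-invariant classifier). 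Hence this classifier separates $X_1=\{\x\}$ from $X_2=\{-\x\}$ while respecting all shifts, and its decision hyperplane has unit normal $\wone=\oned/\sqrt d$.

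Next I would apply Theorem~\ref{theorem:linear_separability} to $S_1={\SH}(\x)$ and $S_2={\SH}(-\x)$. The key simplification is that $f_{dc}$ is itself shift invariant — $f_{dc}(\z^s)=\wone^T\z^s=\wone^T\z=f_{dc}(\z)$ because $\wone$ is constant — so $f_{dc}\equiv\fdcx$ on $S_1$ and $f_{dc}\equiv f_{dc}(-\x)=-\fdcx$ on $S_2$. Taking WLOG $\fdcx<0$ (otherwise exchange the two classes), the first separability condition $\max_{\mathbf{x}_1\in S_1}f_{dc}(\mathbf{x}_1)<\min_{\mathbf{x}_2\in S_2}f_{dc}(\mathbf{x}_2)$ of Theorem~\ref{theorem:linear_separability} reads $\fdcx<-\fdcx$ and holds, so the theorem gives a max-margin separating hyperplane with normal $\wone$ — consistent with Theorem~\ref{thm:cntk}(2) — and margin
\[
\min_{\mathbf{x}_2\in S_2}f_{dc}(\mathbf{x}_2)-\max_{\mathbf{x}_1\in S_1}f_{dc}(\mathbf{x}_1)=-\fdcx-\fdcx=f_{dc}(-\x)-\fdcx .
\]
Since the CNTK separator has exactly this normal $\wone$ and, being bias-free while the antipodal pair projects onto $\wone$ as $\pm\fdcx$ (symmetric about $0$), passes midway between the two points, it attains this margin. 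Therefore the margin obtained with CNTK equals $f_{dc}(-\x)-\fdcx$, the difference between the DC components of the two training vectors, as claimed.

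The only real point to verify carefully is the link between "the separator Theorem~\ref{thm:cntk}(2) outputs" and "the max-margin shift-invariant separator of Theorem~\ref{theorem:linear_separability}": one must check the CNTK hyperplane not only has the correct normal $\wone$ but also the correct offset, so that its margin is the full gap $\min_{S_2}f_{dc}-\max_{S_1}f_{dc}$ rather than something smaller. For an antipodal pair this is immediate from symmetry, so I expect the argument to be very short; the only mild subtleties are recording that $\fdcx\neq 0$ is needed (it should follow from the invertibility hypothesis on $H_K$) and handling the orientation of the two classes WLOG.
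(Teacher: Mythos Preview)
Your proposal is correct and matches the paper's approach: the paper gives no detailed proof, stating only that the corollary ``follows directly from Thm~\ref{theorem:linear_separability},'' and your argument fills in exactly the expected steps---Theorem~\ref{thm:cntk}(2) identifies the CNTK separator as the hyperplane with normal $\wone$ through the origin, and Theorem~\ref{theorem:linear_separability} then gives the margin as the difference of DC components. Your side observations (the bias-free hyperplane is automatically midway for an antipodal pair; invertibility of $H_K$ forces $\fdcx\neq 0$) are the right minor checks and are consistent with the paper's implicit reasoning.
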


{This theorem is proven in the supplemental material.  Here we provide some intuition to explain the results.  For the first part of the theorem, recall that NTK is a kernel method.  When we use only two training examples, $\x$ and $-\x$, with labels 1 and -1, we will learn a function that sums two instances of the kernel centered at $\x$ and $-\x$.  By symmetry, we expect the weight of one to be the negative of the weight of the other. Note that this holds only if $k(\x,\x)=k(-\x,-\x)$, which is not true for all kernels, but is true for NTK.}
{The kernel prediction then depends on the difference function, $k(\x,\z)-k(-\x,\z)$.   We show that for NTK this difference function is linear in $\z$, which implies that the decision boundary is linear. By symmetry, this linear decision boundary has a normal vector in the direction of $\x$.  So any $\z$ will belong to the same class as $\x$ when $\z^T \x \ge 0$.}

{For the second part of the theorem, the two-layer CNTK takes the average of $k(\bar \z_i,\bar \x_j)$ over all patches.  Here $k$ denotes the NTK and $\bar \z_i,\bar \x_j$ indicate patches of a fixed size from the input.  Therefore the difference $K(\z,\x)-K(\z,-\x)$ is linear (an average of linear functions in $\z$), and due to shift-invariance, it must be orthogonal to the constant vector $\wone$.}

\section{Experiments}
\label{sec:experiments}
We have shown theoretically that shift invariance can reduce adversarial robustness.  In this section we describe experiments that indicate that this does occur with real datasets and network architectures.  We also provide experiments on simple, synthetic datasets to begin to address the question of why this might occur.

\subsection{Real architectures and data}

We first compare fully shift invariant and FC networks on the small datasets, MNIST and Fashion-MNIST.  We then compare more realistic ResNet architectures with FC networks on the SVHN dataset.  Finally we examine additional real architectures on CIFAR-10 and ImageNet.

\subsubsection{Shift-invariant CNNs vs.~FC Networks}

% {\bf Robustness of shift-invariant CNNs vs.~Fully Connected Networks}

% Move experiments of SVHN to realistic architectures and shift invariance. As considered resnets aren't shift invariant

% Probably rethink if the section name as experiments, since there's a theorem present as well/

In this section we compare the adversarial robustness of a {\em fully shift-invariant convolutional neural network (CNN) with a fully connected network (FC)}. 
% We experiment with datasets in which FC networks perform decently, if perhaps not as well as CNNs.  Specifically, we experiment using MNIST \citep{lecun2010mnist}, Fashion MNIST \citep{xiao2017fashion}, and SVHN \citep{netzer2011reading} datasets. 

\textbf{Datasets}  We consider two datasets in which FC networks are able to attain reasonable performance compared to CNNs, MNIST \citep{lecun2010mnist} and Fashion-MNIST \citep{xiao2017fashion}.

\textbf{Experimental Settings} All models were trained for 20 epochs using the ADAM optimizer, with a batch size of 200 and learning rate of 0.01. The learning rate is decreased by a factor of 10 at the 10th and 15th epoch. To evaluate the robustness of these models, we use PGD $l_2$ and $l_{\infty}$ attacks \citep{madry2018towards} with different $\epsilon$ values, a single random restart, 10 iterations and step-size of $\epsilon/5$. 
% {\bf David, I did try the experiment with 10 classes as well, and the results were the same.}
% {\bf Vasu, I know that you did the MNIST experiments with two classes.  You should try them using all 10 classes as well.  That is, do we only get more robustness for FC in a two-class problem, or does this also happen with the regular 10-class problem?  I guess this is a question for all datasets.}

% \vsing{Results - }

% \subsubsection*{MNIST and FashionMNIST}

\begin{figure*}
\vspace{-7mm}
    \centering
     \begin{subfigure}[b]{0.49\textwidth}
         \centering
         \includegraphics[trim=0 11 0 7, width=\textwidth]{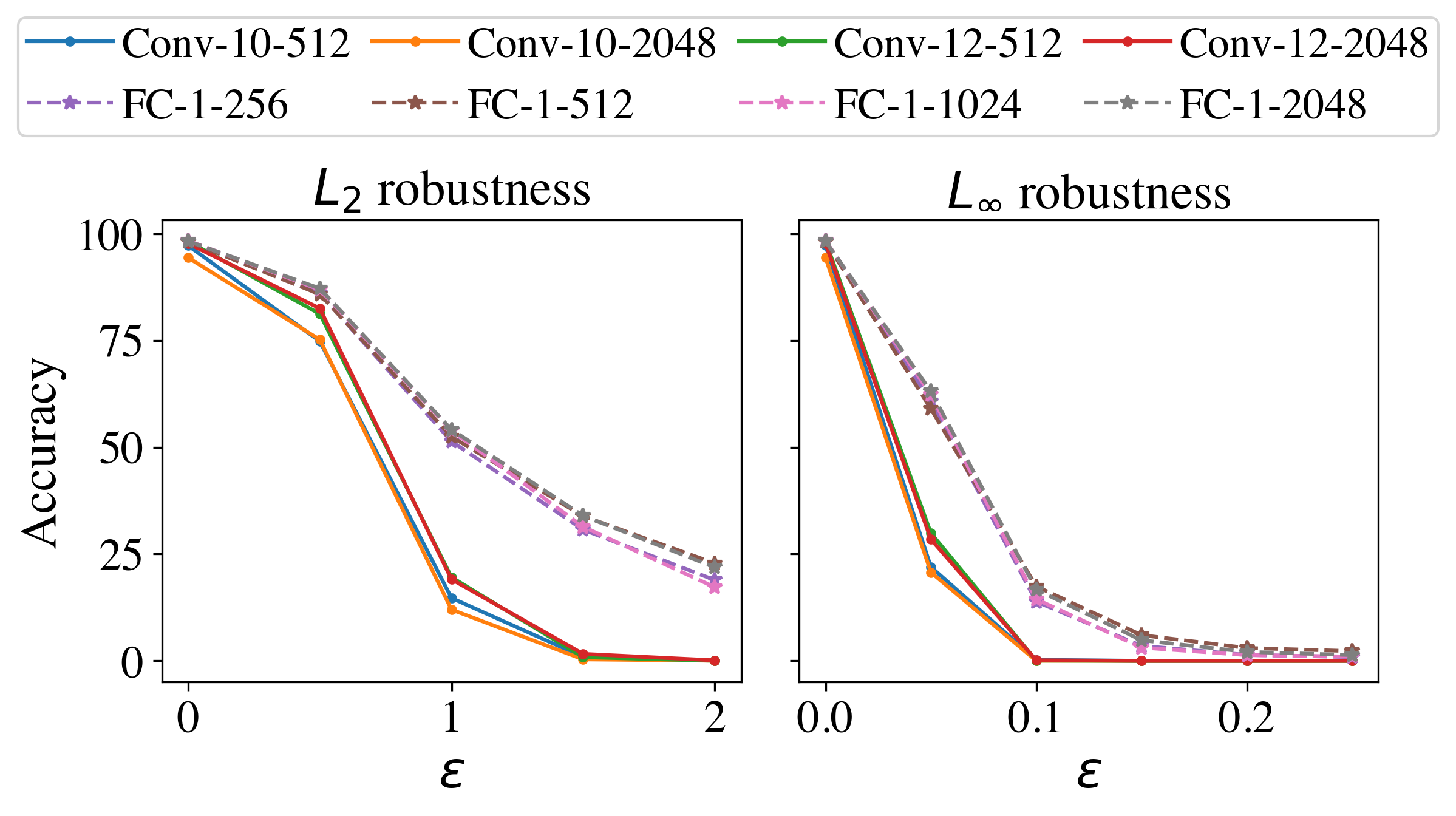}
         \caption{MNIST}
         \label{fig:mnist_robustness}
     \end{subfigure}
    %  \hfill
     \begin{subfigure}[b]{0.49\textwidth}
         \centering \includegraphics[trim=0 11 0 7, width=\textwidth]{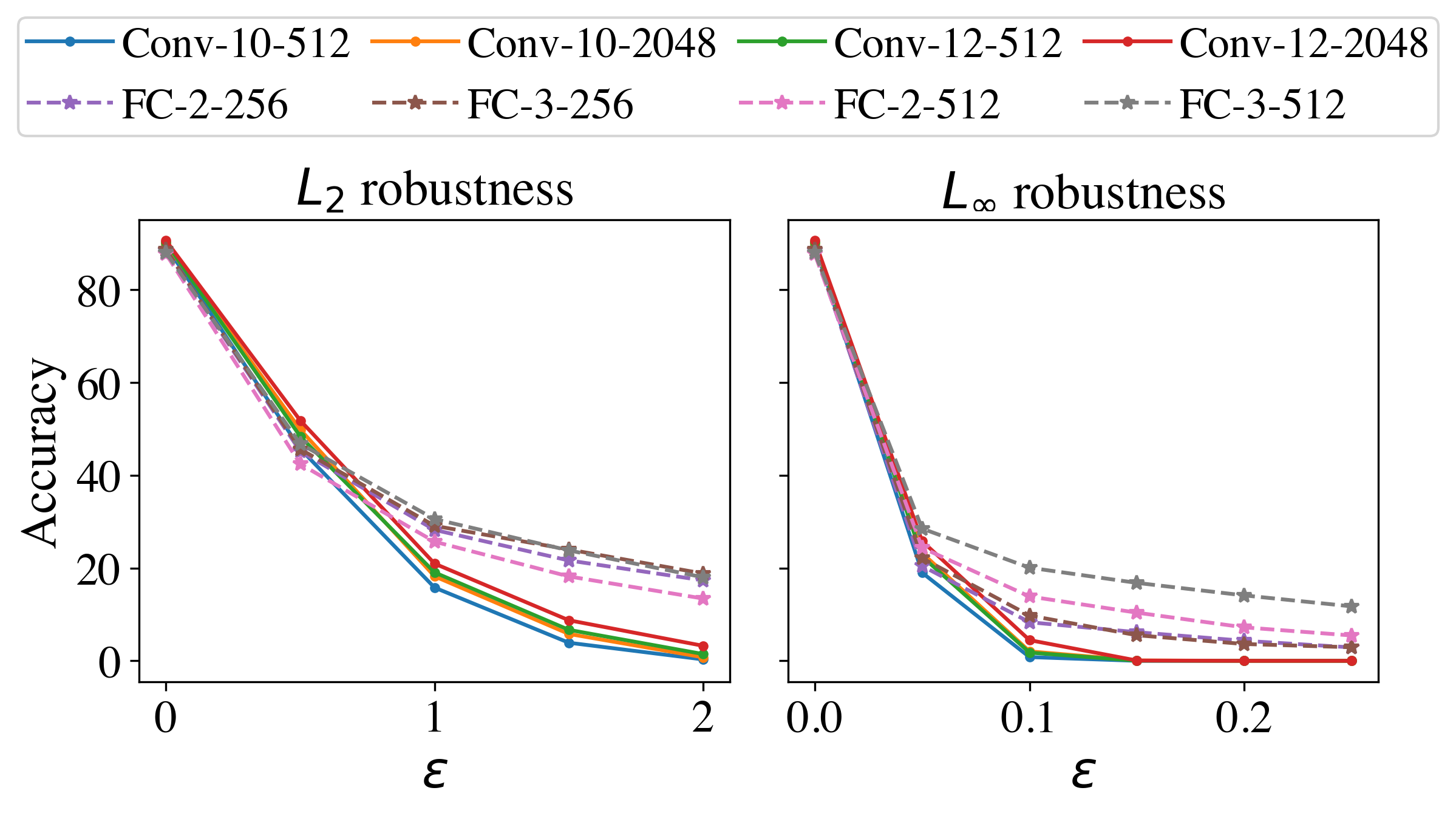}
         \caption{Fashion-MNIST}
         \label{fig:fashion_mnist_robustness}
     \end{subfigure}
    \caption{Robustness of Shift Invariant CNNs vs FC networks}
    \label{fig:robustness_mnist_dataset}
\end{figure*}

\begin{table*}[t]
\small
\begin{tabular}{llll|llll}
\toprule
FC-1-256 & FC-1-512 & FC-1-1024 & FC-1-2048 & FC-2-256 & FC-3-256 & FC-2-512 & FC-3-512 \\
\midrule
16.00    & 20.69    & 20.69     & 19.20     & 19.46    & 18.19    & 20.68    & 18.38 \\
\bottomrule
\end{tabular}
% \caption{{Consistency to the shift of FCNs with different padding sizes.}}
\caption{Shift Consistency of FCNs.}
\label{tab:mnist_consistency}
\end{table*}

We use various FC and shift invariant CNN networks for our experiments. The notation FC-X-Y denotes an FC network with X hidden layers and Y units in each hidden layer.  Conv-X-Y denotes a network with a single convolutional layer with X kernel size and Y filters with stride 1 and circular padding.  All networks use ReLU activation. CNNs have a penultimate layer that performs Global Average Pooling. Finally, a fully connected layer is applied at the end of all the CNNs, which outputs the final logits. \cameraready{{\bf These CNN networks are born fully shift invariant.} On the contrary, the FC networks are inherently not shift invariant. We show this by calculating the consistency scores~\citep{zhang2019making} in Table~\ref{tab:mnist_consistency} - the consistency calculates the percentage of the time that the model preserves its predicted labels when a random shift is applied to the image. We note that these scores are much smaller than 100 for CNNs.} We report results for clean\footnote{Clean Accuracy of models is shown by $\epsilon=0$.} and robust test accuracy for $l_2$ and $l_{\infty}$ attacks at different $\epsilon$ values in Figure \ref{fig:mnist_robustness} and \ref{fig:fashion_mnist_robustness} for MNIST and Fashion-MNIST respectively.
{\bf We observe that although the clean accuracy for the models is similar on the datasets, FC networks are more robust than Shift Invariant CNNs, especially for large $\epsilon$ values. }

\subsubsection{CNNs with different levels of shift invariance}
 
\vspace{-2mm}
\begin{table}[ht]
% \begin{figure}
\begin{minipage}[b]{0.48\linewidth}
% \begin{table}[H]
\small
\begin{tabular}{l|llll}
\toprule
padding size & 0 & 4 & 8 & 12 \\
consistency & 18.06 & 18.08 & 15.13 & 21.67 \\ \midrule
padding size & 16 & 20 & 24 & 28 \\
consistency & 29.83 & 38.40 & 71.45 & 97.80 \\ \bottomrule
\end{tabular}
\vspace{2mm}
\caption{Consistency to the shift of CNNs with different padding sizes.}
\label{tab:consistency}
% \end{table}
\end{minipage}
\hfill
\begin{minipage}[b]{0.48\linewidth}
% \begin{figure}
    \centering
    \includegraphics[width=\linewidth]{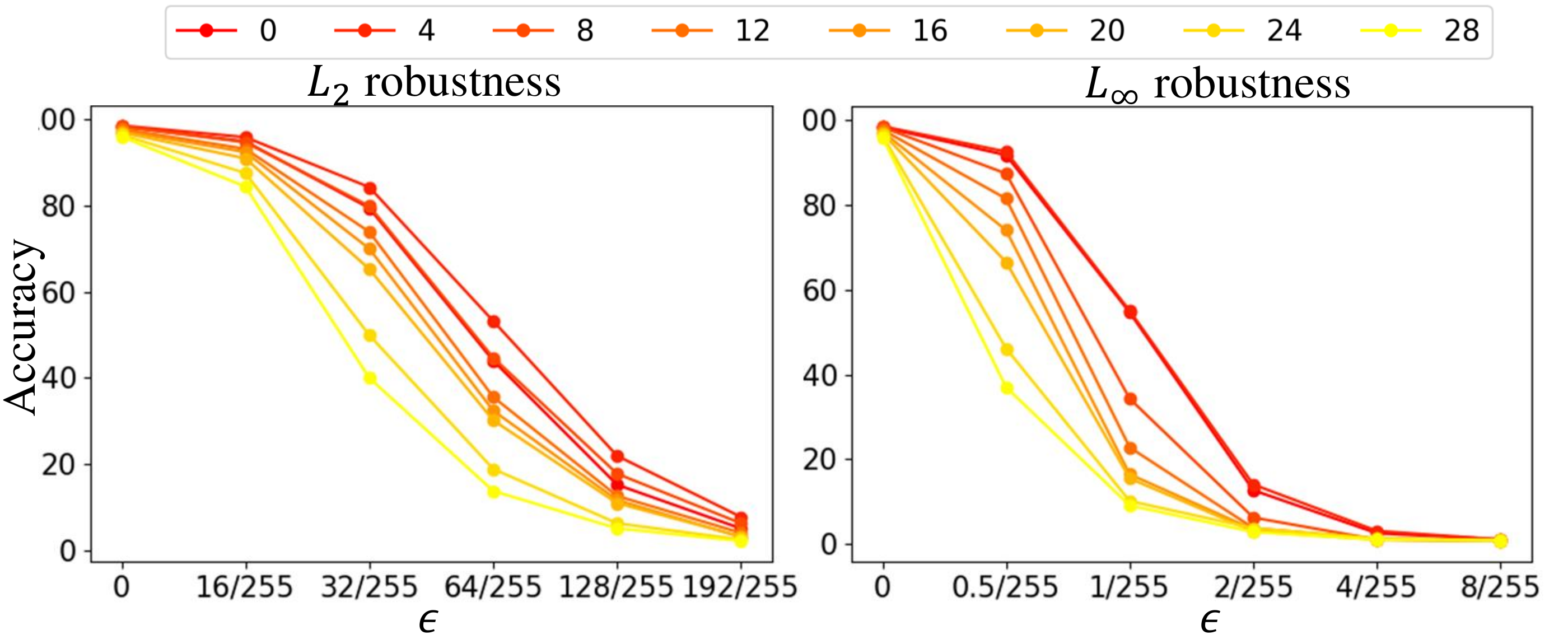}
    \captionof{figure}{Robustness of CNNs with different levels of shift invariance.}
    \label{fig:padding}
% \end{figure}
\end{minipage}
\end{table}

In this section we compare the adversarial robustness of convolutional neural networks (CNNs) that have different levels of shift invariance.  Specifically, we train identical CNNs that all have a single convolutional layer with $1000$ filters of kernel size $28$ on the MNIST dataset and control the shift invariance through the size of circular paddings. {Note that when the padding size equals the kernel size-1, i.e. in the ``same'' mode~\citep{kayhan2020translation}, the output of convolution preserves the width and height of the input and the CNN is fully shift-invariant, which we will discuss more in the next section.} When no padding is added, %\dwj{I'm not sure it's clear what it means to add padding.  Can you explain more?} 
the model is just a FC network with $1000$ hidden units. The higher the padding, the more shift invariant the model is. We demonstrate this by measuring the consistency of network output to shifts as suggested by \citep{zhang2019making}. As shown in Table~\ref{tab:consistency}, the large padding size does indicate a larger consistency to shifts. Next, we evaluate the robustness of these models using Method %\dwj{Why is FGSM used here, while PDG is used elsewhere?}
PGD $l_2$ and $l_{\infty}$ attacks \citep{madry2018towards} with different $\epsilon$ values. The results are shown in Figure~\ref{fig:padding}, which illustrates that \textbf{the more shift invariant CNNs (yellowish lines) are less robust than the less shift invariant CNNs (reddish lines).}

\subsubsection{Realistic architectures and shift invariance}
Before describing experiments with real-world architectures, we discuss their shift invariance.
With certain assumptions on the padding, \citep{cohen2016group} proves that the convolution operation is equivariant to shifts. The convolution operation together with global pooling leads to shift invariance. Also, it is easy to derive from \citep{cohen2016group} results for models that use stride and local pooling, which we summarize in the note below.

\begin{note}
\label{cnn_si}
A Convolutional Neural Network that meets the following assumptions has some shift invariance:
\begin{enumerate}
    \item It consists of $N$ fully convolutional or local pooling layers and $1$ global pooling layer followed by any fully connected layers.
    \item Circular padding is used in a "same" mode in the convolutional and pooling layers. 
\end{enumerate}
Such a CNN with the strides $p_1, p_2, \cdots, p_N$ in the convolutional or pooling layers are invariant to a shift of $P$ pixels, where $P$ is any integer multiple of $\prod_{i=1}^N p_i$.
\end{note}

\citep{kayhan2020translation} discusses padding modes. Architectures applied to real problems are generally not born shift invariant~\citep{zhang2019making, kayhan2020translation} due to the violations of these assumptions. The most common case is the use of zero padding.  \cite{zhang2019making} has shown that realistic architectures with zero padding still preserve approximate invariance to shifts after training. Another common cause of a more severe lack of shift invariance is the use of a fully connected layer as a substitute for the global pooling layer. This is widely seen in the earlier network architectures such as LeNet~\citep{lecun1989backpropagation} and AlexNet~\citep{krizhevsky2014one} while more recent ones have universally adopted a global pooling layer (e.g.~ResNet~\citep{he2016deep}, DenseNet~\citep{huang2017densely}) to reduce the number of parameters. Specifically, \citep{zhang2019making} shows that in practice AlexNet is much less shift invariant than the other architectures. For this reason, we pay extra attention to the comparison of robustness between AlexNet and other architectures in the following sections when FC networks do not achieve decent state of the art accuracy.

% \subsubsection{Robustness of Non-Shift invariant CNNs vs Fully Connected Networks}
\subsubsection{ResNets vs.~FC Nets}

% \begin{table}[ht]
% \begin{minipage}[b]{0.48\linewidth}
% \small
% \centering
% \begin{tabular}{c|cccc}
% \toprule
% \multicolumn{1}{c|}{} & \multicolumn{2}{c}{Orth. Vectors} & \multicolumn{2}{c}{Orth. Frequencies} \\ 
% \midrule
% n & \multicolumn{1}{c}{FC} & \multicolumn{1}{c}{CNN} & \multicolumn{1}{c}{FC} & \multicolumn{1}{c}{CNN} \\ \hline
% 50 & 0.113 & 0.033 & 0.129 & 0.272 \\ 
% 100 & 0.083 & 0.029 & 0.126 & 0.251 \\ 
% 200 & 0.060 & 0.024  & 0.103 & 0.326 \\
% \bottomrule
% \end{tabular}
% \vspace{2mm}
% \caption{Average $l_2$ distance for adversarial examples over all samples for dataset 1 and 2. For orthogonal vectors, FC networks are more robust than CNNs. For orthogonal frequencies, CNNs are more robust than FC networks.}
% \label{tab:orth_datasets}
\begin{figure}[t]
\vspace{-5mm}
\begin{subfigure}[b]{0.48\textwidth}
    \centering
    \includegraphics[trim=0 11 0 7,width=\textwidth]{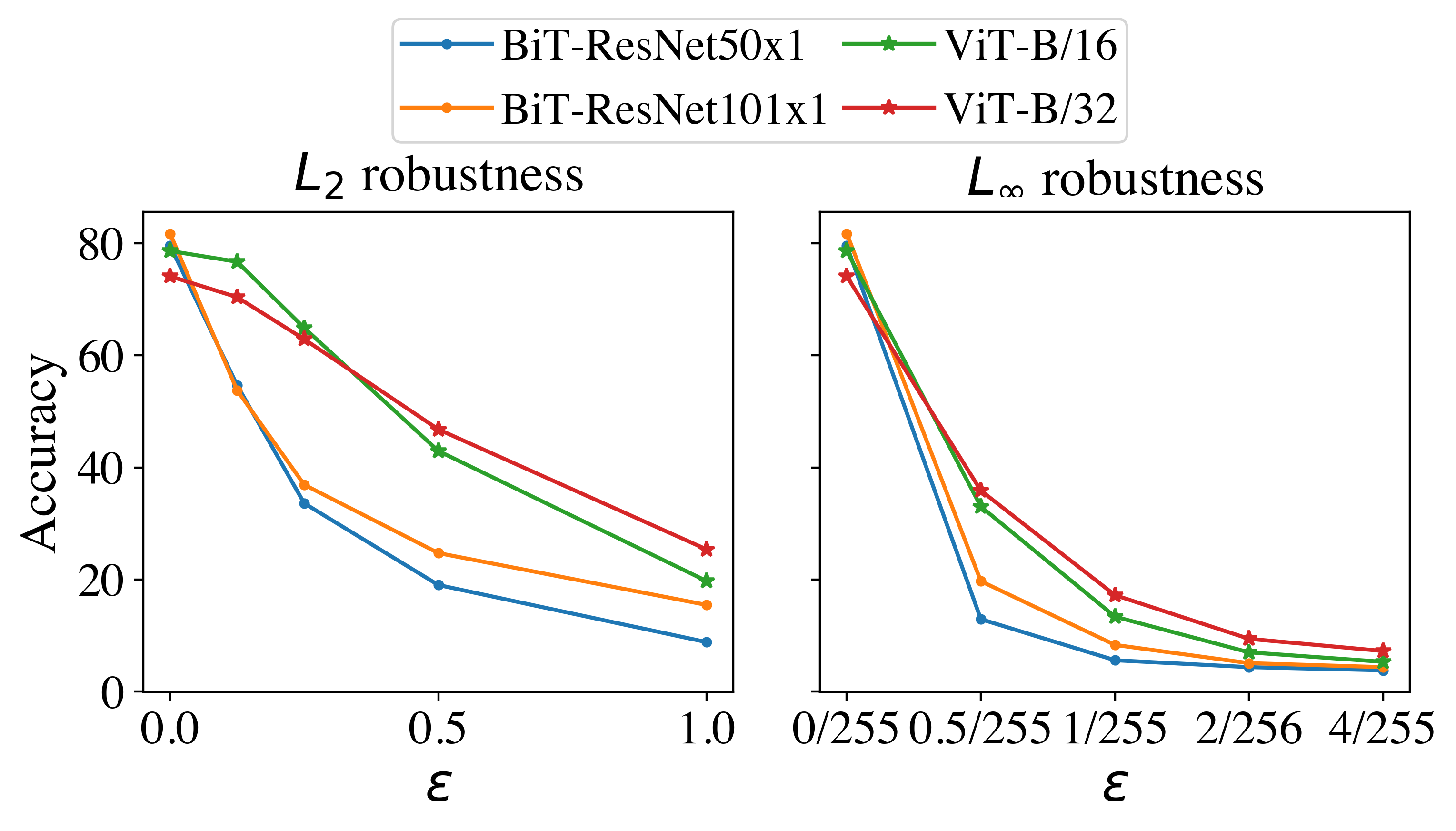}
    \caption{ViT and BiT on ImageNet-1K.}
    \label{fig:vit_robustness}
\end{subfigure} \hfill
\begin{subfigure}[b]{0.48\textwidth}
    \centering
    \includegraphics[trim=0 11 0 7,width=\textwidth]{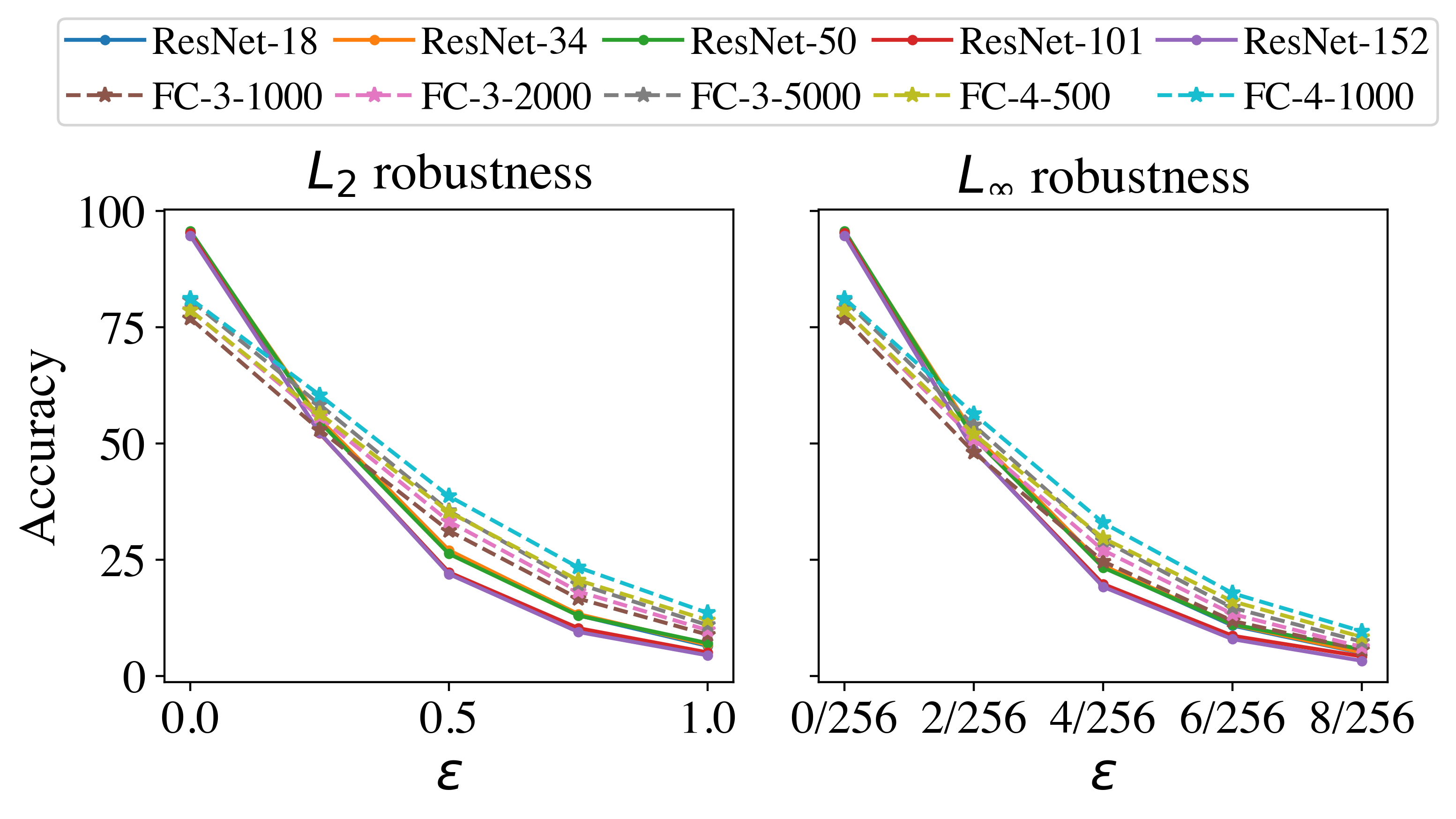}
    \caption{ResNets and FC networks on SVHN.} 
    \label{fig:svhn_robustness}
\end{subfigure}
\caption{Robustness of Transformers, Resnets and FC networks.}
\vspace{-3mm}
\end{figure}
% \end{minipage}
% \end{table}

%\textbf{Datasets} 
We compare ResNets with previously introduced FC networks on SVHN \citep{netzer2011reading}. ResNets are real-world architectures that are not completely shift invariant. We use the SVHN dataset, since it is more difficult to attain reasonable performance with shallow networks on this dataset. 
The models were trained for 100 epochs using SGD with 0.9 momentum and batch-size of 128. A learning rate of 0.1, with decay of factor 10 at the 50th and 75th epochs was used. We use the same PGD attacker as described above. The results for clean and robust accuracy with different $\epsilon$ values for $l_2$ and $l_{\infty}$ are given in Figure \ref{fig:svhn_robustness}.  {\bf The ResNets have much higher clean accuracy, but lower accuracy on adversarial examples, especially when $\epsilon$ is large.}  We conclude that ResNet's approximate shift invariance is associated with less robust performance.

\subsubsection{More realistic networks}
In this section we look at the robustness of trained networks on large scale datasets. We do not have a firm theoretical basis for comparing the shift-invariance of these networks, since none are truly (non-) shift invariant.  But we do expect that AlexNet is less shift-invariant, due to violation of most of the assumptions in Note~\ref{cnn_si}, and so we expect it to be more robust. \cite{zhang2019making} introduces an empirical consistency measure of relative shift invariance, i.e. the percentage of the time that the model classifies two different shifts of the same image to be the same class, which we use to distinguish among models. Although it is not clear that this fully captures our notion of shift invariance, it does support the view that AlexNet is by far the least shift-invariant large-scale CNN. Specifically, the consistency of all the models except for AlexNet are over $85$, while the consistency of AlexNet is only $78.11$~\citep{zhang2019making}. We apply $l_2$ and $l_\infty$ PGD attacks to the pretrained models provided in the Pytorch model zoo on ImageNet. The accuracy of a few representative models under different attack strengths are shown in Figure~\ref{fig:imagenet}. {\bf It clearly illustrates that AlexNet is indeed an outlier in terms of both shift invariance and robustness}. We see similar results with other models.
%, not shown for space considerations. \vsing{We probably shouldn't say space constraints anymore?} 
\citep{galloway2019batch} suggests that batch normalization is a cause of adversarial examples.  However, note that even compared to the VGG models without batch normalization, AlexNet is much less invariant 
and much more robust, together suggesting that shift invariance contributes to a lack of robustness. 

{In addition to these results on realistic CNNs, we also test Visual Transformers (ViTs), which do not have built in shift invariance.  We compare 4 models that are pretrained on ImageNet21K and fine-tuned on ImageNet1K,  and show the accuracy under attack in Figure~\ref{fig:vit_robustness}. The two ViT models are transformers, while the BiT models are ResNets with Group Normalization and standardized
convolutions~\citep{dosovitskiy2021an}. We can see that ViTs are much more adversarially robust than their CNN counterparts. This is contemporaneously found by ~\cite{shao2021adversarial}, also. We conjecture that this can be connected to the fact that ViTs do not have built in shift invariance. To test this, we calculate the consistency of these 4 models. The consistency of the ViT-B/16 and ViT-B/32 models are 83.51 and 78.06 respectively, while the consistency of BiT-ResNet50x1
and BiT-ResNet101x1 are 87.23 and 88.31 respectively. This indicates that visual transformers learn some measure of shift invariance, but are less shift invariant than BiT-ResNets. These less shift invariant ViT transformers therefore achieve better robustness than other architectures.}
% mention about BiT and Vit is more robust than Resnet
% \parbox{0.48\textwidth}{
% \small
% \centering
% \setlength\tabcolsep{2.5pt}
%     \begin{tabular}{l|lllll}
%     \toprule
%     $L_2$ attack $\epsilon$     & 0     & 0.125 & 0.25  & 0.5   & 1     \\ \midrule
%     BiT-ResNet50x1 & 79.56 & 54.63 & 33.60 & 19.02 & 8.83  \\ 
%     BiT-ResNet101x1 & 81.72 & 53.73 & 36.91 & 24.72 & 15.47 \\ 
%     ViT-B/16       & 78.62 & 76.69 & 64.89 & 42.94 & 19.73 \\ 
%     ViT-B/32       & 74.10 & 70.37 & 62.87 & 46.79 & 25.38\\ \bottomrule
%     \end{tabular}
% }
% \parbox{0.48\textwidth}{
% \small
% \centering
% \setlength\tabcolsep{2.5pt}
%     \begin{tabular}{llll}
%     \toprule
%      0.5/255 & 1/255 & 2/255 & 4/255 \\  \midrule
%     12.92 & 5.59 & 4.36 & 3.80 \\ 
%     19.70 & 8.33 & 5.07 & 4.36  \\ 
%     33.01 & 13.37 & 7.00 & 5.31  \\ 
%     35.86 & 17.21 & 9.41 & 7.24  \\  \bottomrule
%     \end{tabular}
% }

To further evaluate real-world architectures, we perform additional experments on CIFAR-10. We report test accuracy and robustness of these architectures under different attacks in Figure~\ref{fig:cifar10}. {Unlike Imagenet where fully-connected networks don't perform well, it has been shown that FCNs can achieve fairly high accuracy on the CIFAR-10 dataset \citep{neyshabur2020learning}. To compare the robustness of FCNs we also train a 3-layer MLP network, named Sparse-MLP using the $\beta$-lasso method following the same experimental setup proposed by \cite{neyshabur2020learning}.}. For AlexNet and five variants of ResNet, we train the models for 200 epochs using SGD with a cosine annealing schedule~\citep{loshchilov2016sgdr}. We use a momentum equal to 0.9 for SGD and weight decay of $5e-4$. We then evaluate the consistency of the models to shifts and find that all the ResNet models have consistency larger than $75$ while the consistency of AlexNet {and Sparse-MLP is much smaller at  $34.79$ and $29.18$ respectively. The clean accuracies of AlexNet and Sparse-MLP are lower than the ResNets. However, their robust accuracy decreases at a much slower rate as the attack strength increases, {\bf showing that the less shift invariant classifiers, AlexNet and Sparse-MLP achieve better robustness}}.   % \dwj{Vasu has suggested that as Shao et al. is only on arxiv, it might make sense to report your experiments.  Eg., you can describe your experiments, and add a sentence saying that contemporaneously, Shao et al. have described similar experiments in which they draw a similar conclusion.  Either way, I think it makes sense to move the discussion of transformers up, and talk about them with Alexnet.  I thought you had some indication that transformers were also less consistent?  If so, you can say that we expect Alexnet to be much less shift invariant, and we suspect that transformers may also be less shift invariant, as they lack convolutional layers.  You might say that we speculate that they will learn some measure of shift invariance, but be further from fully shift invariant than Resnet.}

% \begin{figure}
%     \centering
%     \includegraphics[trim=0 11 0 7, clip, width=\linewidth]{}
%     \caption{Robustness of models on Cifar10.}
%     \label{fig:cifar10}
% \vspace{-2mm}
% \end{figure}
\begin{figure*}[t]
\vspace{-6mm}
    \centering
     \begin{subfigure}[b]{0.515\textwidth}
         \centering
         \includegraphics[trim=0 11 0 7, width=\textwidth]{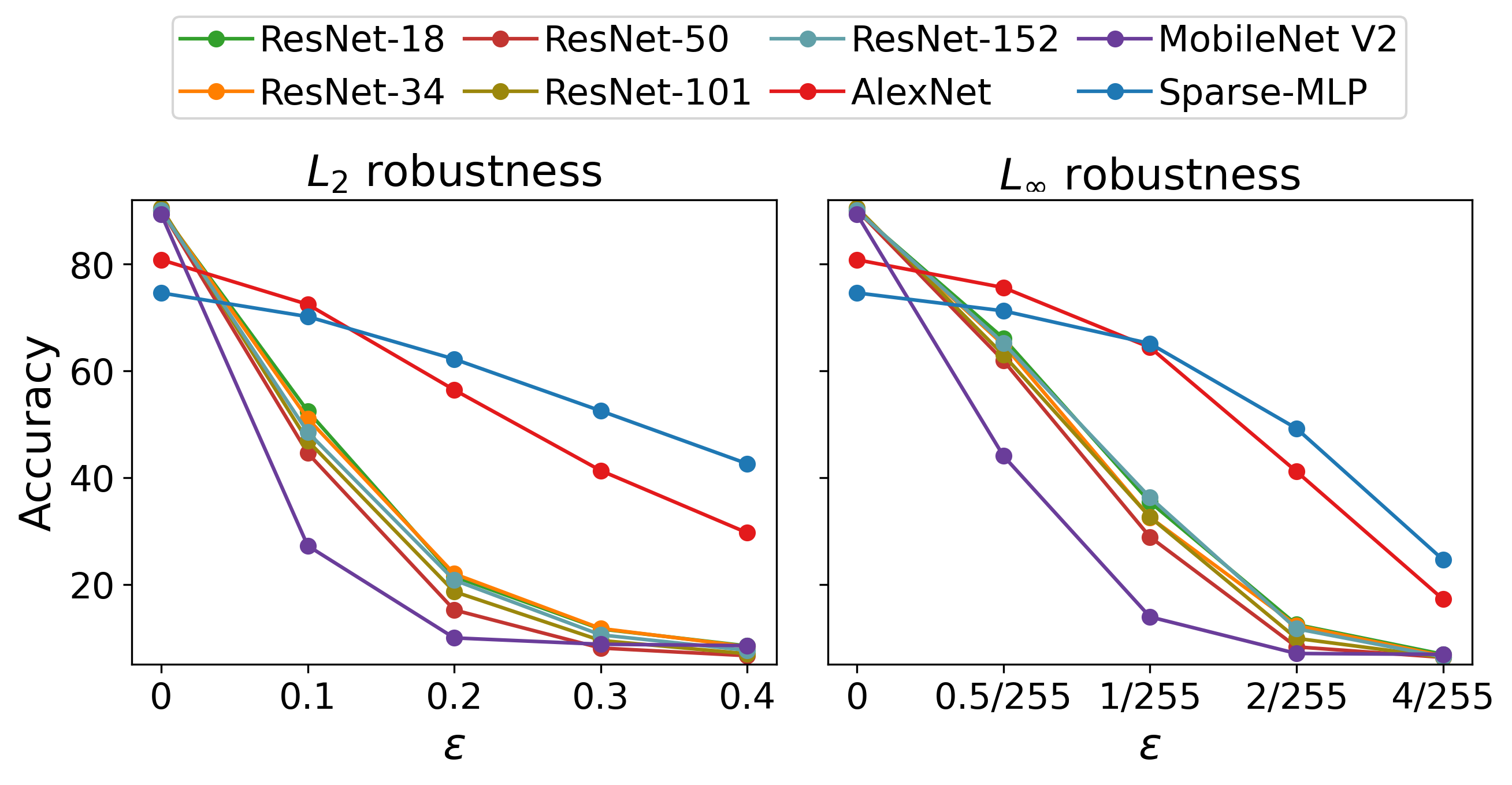}
         \caption{\cameraready{CIFAR-10}}
         \label{fig:cifar10}
     \end{subfigure}
    %  \hfill
     \begin{subfigure}[b]{0.468\textwidth}
         \centering \includegraphics[trim=0 11 0 7, width=\textwidth]{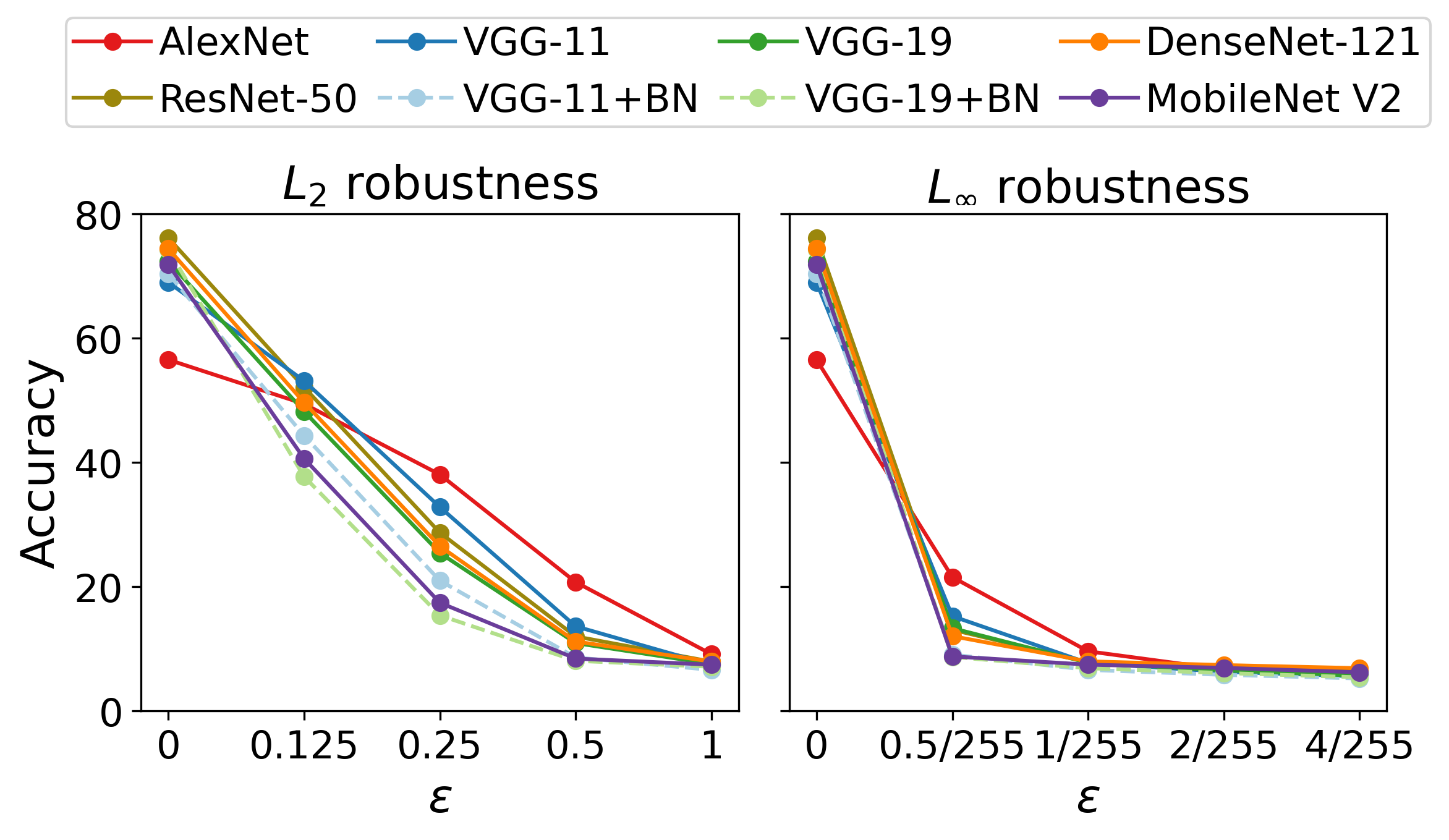}
         \caption{ImageNet}
         \label{fig:imagenet}
     \end{subfigure}
    \caption{Robustness of models on CIFAR-10 and ImageNet datasets. "BN" indicates that the model uses batch normalization.}
\vspace{-4mm}
\end{figure*}

\subsection{Dimension and margin}
\label{subsec:dimension_and_margin}

Training a network with a shift invariant architecture is similar to training a FC network using data augmentation with all shifted versions of the training set.  This raises the question: why would data augmentation with shifted training examples affect adversarial robustness?  Note that it has been argued that in a different context, more training data should lead to greater robustness \citep{schmidt2018adversarially}. 
% For linear classifiers, we can prove that shift invariance reduces robustness, but it is not so clear what produces this effect in real networks. \sg{I am not sure but this looks like a general introduction to the entire section?}
We examine this question with some inital experiments.  % These are far from definitive, but we feel that they provide some interesting clues to the factors that determine adversarial robustness.  

% \subsubsection{Dimension}

% \begin{figure}
%     \centering
%     \includegraphics[width=0.40\textwidth]{}
%     \caption{We add linear margin to orthogonal classes and compare the average $l_2$ $\epsilon$ required to generate adversarial examples.  FC networks are more robust as the linear margin increases. \dwj{Are you using the same definition of margin as me?}}
%     \label{fig:orthogonal_data}
%     \vspace{-4mm}
% \end{figure}

In prior work it has been suggested that larger input dimension of the data \citep{goodfellow2014explaining,gilmer2018adversarial} or larger intrinsic dimension of the data \citep{khoury2018geometry} can reduce robustness.  One way that shift invariance might reduce robustness is by increasing the intrinsic dimension of the training data.  

% \dwj{Stopped here.  Vasu needs to insert experiments.} \vsing{Included in Table \ref{tab:orth_datasets} and Fig \ref{fig:orthogonal_data}}

% \vsing{Not sure, if we should keep the orthogonal frequencies experiments, as CNNs are more robust. I also ran the experiment with Odd-even frequencies, there CNNs are less robust than current experiments, but still more robust than FCN.}
 
%  In Figure ??  we show two datasets that allow us to further examine this question in a controlled way. 
%  \vsing{Do we need a figure for the vector data? I ran some experiments with image orthogonal frequencies as well, we can use the same. Although the data with linear margin is with vectors. }

% {\em Here, if there is time, we can include experiments with the two classes that consist of low and high frequency signals.  We compare using all orthogonal data to random data that lies in a lower-dimensional space.  This should be done, though with an FC network, and I'm not sure it will work.  If not, we could just include the experiments with random vectors.  }

To examine this question we create two synthetic datasets, one in which shift invariance increases the implicit dimension of the data, and one in which it does not.  In both datasets, each class contains $n$,  orthogonal vectors, with varying $n$ and dimension $d=2000$.  In the first set, the vectors are randomly chosen from a uniform distribution, subject to the constraint that they be orthogonal.  The set of vectors in a single class have dimension $n$, but when we consider all shifted versions of even a single vector, this has has dimension $d$.  In the second dataset, training examples are sampled by frequencies $\sin{2\pi kx}$ and $\cos{2\pi kx}$, where $k \in [1,n]$. The first class consists of frequencies in which $k$ is odd, and the second class consists of frequencies with even $k$.   Because all shifts of $\sin{2\pi kx}$ and $\cos{2\pi kx}$ are linear combinations of these two vectors, the vectors in a class have dimension $n$, and all shifts of the vectors have dimension $n$.  

If increasing the intrinsic dimension of the data reduces robustness, we would expect that for the FC network, increasing $n$ would reduce robustness for both datasets.  However, we would expect that for the shift invariant network, we would see much lower robustness for dataset 1 than for dataset 2, especially for small $n$.  We do observe this, as shown in Table \ref{tab:orth_datasets}. 

% \begin{table}[t]
% \centering
% \small
% \caption{Average $l_2$ distance for all samples on FC network for dataset 3. The average $l_2$ distance increases with and increase in $p$.}
% % \resizebox{\columnwidth}{!}{%
% \begin{tabular}{c|c|c|c|c|c|c}
%  & \multicolumn{6}{c}{p} \\ \hline
% n & 0 & 0.02 & 0.05 & 0.1 & 0.2 & 0.3 \\ \hline
% 50 & 0.106 & 0.108 & 0.096 & 0.114 & 0.152 & 0.181 \\ \cline{1-7} 
% 100 & 0.080 & 0.081 & 0.080 & 0.094 & 0.121 & 0.181 \\ \hline
% 200 & 0.060 & 0.061 & 0.061 & 0.079 & 0.112 & 0.170
% \end{tabular}%
% \label{tab:orthogonal_vectors}
% \vspace{-5mm}
% \end{table}
\begin{table}[ht]
% \begin{table}[h]
% \footnotesize
% \centering

\parbox{0.48\textwidth}{
\small
\centering
\begin{tabular}{c|cccc}
\toprule
\multicolumn{1}{c|}{} & \multicolumn{2}{c}{Orth. Vectors} & \multicolumn{2}{c}{Orth. Frequencies} \\ 
\midrule
n & \multicolumn{1}{c}{FC} & \multicolumn{1}{c}{CNN} & \multicolumn{1}{c}{FC} & \multicolumn{1}{c}{CNN} \\ \hline
50 & 0.113 & 0.033 & 0.129 & 0.272 \\ 
100 & 0.083 & 0.029 & 0.126 & 0.251 \\ 
200 & 0.060 & 0.024  & 0.103 & 0.326 \\
\bottomrule
\end{tabular}
\vspace{2mm}
\caption{Average $l_2$ distance for adversarial examples over all samples for dataset 1 and 2. For orthogonal vectors, FC networks are more robust than CNNs. For orthogonal frequencies, CNNs are more robust than FC networks.}
\label{tab:orth_datasets}
% \begin{tabular}{l|p{0.6cm}p{0.6cm}p{0.6cm}p{0.6cm}p{0.6cm}}
% \begin{tabular}{l|lllll}
% \hline
% $L_2$ attack $\epsilon$     & 0     & 0.125 & 0.25  & 0.5   & 1     \\ \hline
% BiT-ResNet50x1 & 79.56 & 54.63 & 33.60 & 19.02 & 8.83  \\ \hline
% ViT-B/16       & 78.62 & 76.69 & 64.89 & 42.94 & 19.73 \\ \hline
% \midrule
% $L_\infty$ attack $\epsilon$     & 0     & 0.5/255 & 1/255 & 2/255 & 4/255 \\ \hline
% BiT-ResNet50x1 & 79.56 & 12.92 & 5.59 & 4.36 & 3.80  \\ \hline
% ViT-B/16       & 78.62 & 33.01 & 13.37 & 7.00 & 5.31  \\ \hline
% \end{tabular}
% \vspace{1mm}
% \caption{Robustness of ViT and BiT.}
% \label{tab:vit_results}
}
% \end{minipage}
% \end{table}
% \begin{table}[t]
\hfill
% \begin{minipage}[b][0.48\textwidth]
% \centering
% \small
% \hfill
\parbox{0.48\textwidth}{

\setlength\tabcolsep{2.5pt}
\centering
\small

\begin{tabular}{c|cccccc}
\toprule
 & \multicolumn{6}{c}{$p$} \\ \midrule
$n$ & 0 & 0.02 & 0.05 & 0.1 & 0.2 & 0.3 \\ \hline
50 & 0.106 & 0.108 & 0.096 & 0.114 & 0.152 & 0.181 \\
100 & 0.080 & 0.081 & 0.080 & 0.094 & 0.121 & 0.181 \\
200 & 0.060 & 0.061 & 0.061 & 0.079 & 0.112 & 0.170 \\
\bottomrule
\end{tabular}

\vspace{2mm}
\caption{Average $l_2$ distance for all samples on FC network for dataset 3. The average $l_2$ distance increases with and increase in $p$.}
\label{tab:orthogonal_vectors}
}
% \vspace{-2mm}
% \end{minipage}
\end{table}

In our theoretical results, and the example in Figure \ref{fig:simple_exp}, we see that considering all shifts of the data may not only increase its dimension but also reduce the linear margin.  To tease these effects apart, we create a third data set in which as its size increases, its dimension increases but its margin does not.   

In this third data set, all samples in one class have a common component in a random direction, and a second component in a random direction that is orthogonal to all other vectors.  That is, let $\mathbf{c}_1, \mathbf{c}_2, \mathbf{r}^1_i, \mathbf{r}^2_i, 1 \le i \le n$ be a set of randomly chosen, orthogonal unit vectors.  Then the $i$'th vector in class $j$ is given by $\mathbf{x}_i^j = p\mathbf{c}_j + \mathbf{r}^j_i$, where $p$ is a constant that we vary.  The margin between the classes will be at least $\sqrt{2}p$.  Since we are not controlling the margin for all shifts of this data, we classify this data only using an FC network.  As shown in Table \ref{tab:orthogonal_vectors}, we observe that as $p$ increases, robustness increases. Also for large $p$, robustness is similar across different $n$ values suggesting that the linear margin  predicts robustness better than the dimension of the data.

 %\dwj{I think the results of this experiment need to be presented differently.  I don't think we gain anything by showing results for a shift-invariant network, so I would leave this out.  I think we want to show results as we vary $n$ and $p$, maybe in a table.  I guess the results should show that varying $n$ doesn't make much difference, and varying $p$ increases robustness.  So we can conclude that the margin, rather than the dimensionality is what affects robustness.}

The experiments in this subsection are meant to highlight some interesting questions about the connection between shift invariance and robustness.  Does shift-invariance reduce robustness by increasing the implicit dimension of the data?  And if so, is this because it leads to data that implicitly has a smaller margin?  Our tentative answer to these questions is yes, based on experiments with simple datasets.  But these are questions that surely deserve greater attention.

{\bf Potential Negative Societal Impacts}

Adversarial attacks on deep learning pose significant societal threats.  They threaten to disrupt deployed machine learning systems, some of which may play a critical role in health and safety, including in health care systems or self-driving cars.  Our work aims to provide a better fundamental understanding of these attacks.  This might potentially lead to either better defenses against attacks or to more effective attacks.  However, we feel that on balance, a better understanding of security threats is helpful in building more secure systems.

\section{Conclusion}
\label{sec:conclusion}
We have shown theoretically and experimentally that shift invariance can reduce adversarial robustness in neural networks.  We prove that for linear classifiers, shift invariance reduces the margin between classes to just the difference in their DC components.  Using this, we construct a simple, intuitive example in which shift invariance dramatically reduces robustness.  Our experiments provide evidence that this reduction in robustness shows up in real networks as well.  Finally, our experiments on synthetic data provide a step towards understanding the relationship between robustness, shift invariance, dimensionality and the linear margin of the data, in neural networks.

\begin{ack}
The authors thank the U.S.- Israel Binational Science Foundation, grant number 2018680, the National Science Foundation, grant no. IIS-1910132, the Quantifying Ensemble Diversity for Robust Machine Learning (QED for RML) program from DARPA and the Guaranteeing AI Robustness Against Deception (GARD) program from DARPA for their support of this project.
\end{ack}

\clearpage

\bibliography{reference}
\bibliographystyle{icml2021}

\clearpage
\setcounter{page}{1}
 \begin{center}
\Large
    \textbf{Supplementary Material}
\end{center}

\section{Proof of Theorem \ref{theorem:linear_separability}}
\label{sec:proof1}

In this section, we prove Theorem \ref{theorem:linear_separability} on the effect of shift invariance on the margin of a linear classifier. We call a linear classifier shift invariant when it places all possible shifts of a signal in the same class. We prove that for a shift invariant linear classifier the margin will depend only on differences in the DC components of the training signals.  It follows that for the two classes shown in Figure \ref{fig:simple_exp}, the margin of a linear, shift invariant classifier will shrink in proportion to $\frac{1}{\sqrt{d}}$, where $d$ is the number of image pixels.

\addtocounter{theorem}{-2}
\begin{theorem}
Let $S_1$ and $S_2$ denote the sets of all shifts of $X_1$ and $X_2$, as described above. They  are linearly  separable if and only if $\max_{\mathbf{x_1} \in S_1 } f_{dc}(\mathbf{x_1}) < \min_{\mathbf{x_2} \in S_2 } f_{dc}(\mathbf{x_2})$ or $\max_{\mathbf{x_2} \in S_2 } f_{dc}(\mathbf{x_2}) < \min_{\mathbf{x_1} \in S_1 } f_{dc}(\mathbf{x_1})$. Furthermore, if the two classes are linearly separable then, if the first inequality holds, the margin is $\min_{\mathbf{x_2} \in S_2 } \fdcxtwo - \max_{\mathbf{x_1} \in S_1 } \fdcxone$, and similarly if the second inequality holds.  Furthermore, the max margin separating hyperplane has a normal of $\wone$.

\end{theorem}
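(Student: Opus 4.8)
The plan is to reduce the whole statement to two elementary facts about circular shifts, together with the Cauchy--Schwarz inequality. The first fact is that a circular shift is an orthogonal transformation, so $\norm{\x^s}=\norm{\x}$ and, more importantly, the DC component is itself shift-invariant: $f_{dc}(\x^s)=f_{dc}(\x)$ for every $s$. Hence $f_{dc}$ is constant on each orbit ${\SH}(\x)$, and the maxima and minima of $f_{dc}$ over $S_i$ coincide with those over $X_i$. The second fact is an averaging lemma: for any vector $\w$ and any signal $\x$,
\[
\frac1d\sum_{s=0}^{d-1}\w^T\x^s \;=\; \fdcw\,\fdcx,
\]
which holds because $\sum_{s}\x^s=\big(\sum_i x_i\big)\oned=\sqrt d\,\fdcx\,\oned$ (each coordinate of $\x$ visits each position exactly once), so the left-hand side equals $\frac1d\w^T\big(\sqrt d\,\fdcx\,\oned\big)=\fdcx\,\fdcw$.

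Next I would isolate the quantity that matters. For a unit normal $\w$, the width of the widest slab separating $S_1$ (on the negative side) from $S_2$ is $\mu(\w):=\min_{\x_2\in S_2}\w^T\x_2-\max_{\x_1\in S_1}\w^T\x_1$, which is independent of the bias term; the two sets are linearly separable iff $\mu(\w)>0$ for some unit $\w$ in one of the two orientations, and the max margin is then $\max_{\norm{\w}=1}\mu(\w)$. Sufficiency is immediate from the first fact: assuming WLOG $\max_{\x_1\in S_1}\fdcxone<\min_{\x_2\in S_2}\fdcxtwo$, take $\w=\wone$, which is a unit vector and satisfies $\wone^T\x^s=f_{dc}(\x)$ for every shift; then $\mu(\wone)=\min_{\x_2\in S_2}\fdcxtwo-\max_{\x_1\in S_1}\fdcxone>0$, so the sets are separable and this direction realizes exactly the claimed margin.

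For necessity and optimality, let $\w$ be an arbitrary unit vector. Using $\min_s\w^T\x_2^s\le$ the shift-average, the averaging lemma gives $\min_{\x_2\in S_2}\w^T\x_2\le\min_{\x_2\in X_2}\fdcw f_{dc}(\x_2)$, and the mirror bound $\max_s\w^T\x_1^s\ge$ the average gives $\max_{\x_1\in S_1}\w^T\x_1\ge\max_{\x_1\in X_1}\fdcw f_{dc}(\x_1)$. Splitting on the sign of $\fdcw$ to pull it out of the min/max, and using Cauchy--Schwarz $|\fdcw|=|\wone^T\w|\le\norm{\wone}\norm{\w}=1$, these combine to $\mu(\w)\le|\fdcw|\big(\min_{S_2}f_{dc}-\max_{S_1}f_{dc}\big)$ when $\fdcw\ge0$ and $\mu(\w)\le|\fdcw|\big(\min_{S_1}f_{dc}-\max_{S_2}f_{dc}\big)$ when $\fdcw<0$. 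Thus $\mu(\w)>0$ forces one of the two DC-gap conditions to be positive and, in that case, $\mu(\w)$ is at most the corresponding gap; with the sufficiency direction this pins the max margin to $\min_{S_2}f_{dc}-\max_{S_1}f_{dc}$ (resp.\ the mirror), and attaining it forces $|\fdcw|=1$, so the equality case of Cauchy--Schwarz forces $\w=\wone$ (resp.\ $-\wone$), giving the stated normal direction.

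The main obstacle is bookkeeping rather than any single hard step: correctly handling the bias (verifying $\mu(\w)$ is bias-free), the two orientations and which class carries the larger DC value, and keeping straight that $\min_s\w^T\x^s$ lies below the shift-average while $\max_s\w^T\x^s$ lies above it, so that the two estimates point in the directions needed to close the argument. Everything else is just the two facts above plus Cauchy--Schwarz.
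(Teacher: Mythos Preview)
Your proposal is correct and follows essentially the same route as the paper's proof. Both arguments rest on the same two ingredients: the shift-averaging identity $\frac1d\sum_s\w^T\x^s=\fdcw\,\fdcx$ (the paper's Equation~(\ref{eqn:shift_sum})) together with the min/max-vs-average inequalities, and the bound $|\fdcw|\le 1$ for unit $\w$. The only differences are cosmetic: you package the margin as the bias-free quantity $\mu(\w)$ and name Cauchy--Schwarz explicitly, you split on $\sign(\fdcw)$ rather than assuming WLOG $\fdcw>0$, and you spell out the equality case to deduce uniqueness of the normal $\wone$, which the paper leaves implicit.
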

\begin{proof}
We only consider the case in which $\max_{\mathbf{x}_1 \in S_1 } \fdcxone < \min_{\mathbf{x}_2 \in S_2 } \fdcxtwo$, without loss of generality.  Two classes are linearly separable iff there exists a $d$-dimensional unit vector $\w$ and a threshold $T$ such that: $\forall \mathbf{x}_1 \in S_1, \mathbf{w}^T\mathbf{x}_1 \le T$ and $\forall \mathbf{x}_2 \in S_2, \mathbf{w}^T\mathbf{x}_2 \ge T$.  We say the margin is: $\min_{\x_1 \in S_1, \x_2 \in S_2} \w^T \x_2 - \w^T \x_1$.

First, it is obvious that if $\max_{\mathbf{x_1} \in S_1 } \fdcxone < \min_{\mathbf{x_2} \in S_2 } \fdcxtwo$ then $S_1$ and $S_2$ are linearly separable, by letting $\mathbf{w} = \wone$, and 
\[
T = \max_{\mathbf{x_1} \in S_1 } \fdcxone + \frac{\min_{\mathbf{x_2} \in S_2 } \fdcxtwo - \max_{\mathbf{x_1} \in S_1 } \fdcxone}{2}
\]
The margin for $\wone$ is $\min_{\mathbf{x_2} \in S_2 } \fdcxtwo - \max_{\mathbf{x_1} \in S_1 } \fdcxone$.

Second, we consider the opposite direction, supposing that the two classes are linearly separable. In this case there exist some $\w$ and $T$ that will separate the classes.  We can assume WLOG that $\|\w\| = 1$ and $\fdcw > 0$.  Let $\mathbf{x}_1 \in X_1$.  
Then we have WLOG:
$
\forall s ~~~ \mathbf{w}^T \mathbf{x}_1^s < T
$.
This gives:
\[
\frac{\sum_{s=0}^{d-1} \mathbf{w}^T\mathbf{x}_1^s}{d} < T
\implies
\frac{\mathbf{w}^T\sum_{s=0}^{d-1} \mathbf{x}_1^s}{d} < T
\]
Note that $\forall \mathbf{x} \in \mathbb{R}^d$,
$\sum_{s=0}^{d-1} \mathbf{x}^s$ is just a constant vector of length $d$ with each term equal to $\sqrt{d} \fdcx$.  Therefore:
\begin{equation}
\label{eqn:shift_sum}
\frac{\sum_{s=0}^{d-1} \mathbf{w}^T\mathbf{x}^s}{d} = \fdcw \fdcx
\end{equation}
so
\[
\fdcw \fdcxone  < T~~~~\forall \mathbf{x}_1 \in X_1
\]
By similar reasoning, we have:
\[
\fdcw \fdcxtwo  > T~~~~\forall \mathbf{x}_2 \in X_2
\]
%\sg{WLOG, we assume $\mathbf{w}_{dc} > 0$. That is , if  $\mathbf{w}_{dc} < 0$, we multiply it by $-1$. And  $\mathbf{w}_{dc}$ cannot be $0$ since it can never separate the data according to the two inequalities above.}
Because $\fdcw > 0$, $\fdcxone < \fdcxtwo$.
So:
\[
\max_{\mathbf{x}_1\in X_1} \fdcxone < \frac{T}{\fdcw}
~~~~
\text{and}
~~~~
\min_{\mathbf{x}_2\in X_2} \fdcxtwo > \frac{T}{\fdcw}
\]
so 
\begin{equation}
\label{eq:maxminineq}
\max_{x_1\in X_1} \fdcxone < \min_{x_2\in X_2} \fdcxtwo
\end{equation}
This shows that $S_1$ and $S_2$ are linearly  separable if and only if their DC components are separable.

We now show that if $S_1$ and $S_2$ are linearly separable, for the max margin separator we have $\w = \wone$, with a margin of $\min_{\mathbf{x}_1 \in S_2 } \fdcxtwo - \max_{\mathbf{x}_1 \in S_1}  \fdcxone$. Because $\wone^T \mathbf{x} = \fdcx$, and the DC components are separable, $\wone$ separates the data, and we can see that the margin will be $\min_{\mathbf{x} \in S_2 } f_{dc}(\mathbf{x}) - \max_{\mathbf{x} \in S_1 } f_{dc}(\mathbf{x})$.

So, it remains to show that no other choice of $\mathbf{w}$ separates the data with a larger margin.  Because we assume, WLOG that $\|\mathbf{w}\| = 1$ the margin is:
\[
\min_{\mathbf{x}_2 \in S_2} \mathbf{w}^T \mathbf{x}_2 - \max_{\mathbf{x}_1 \in S_1} \mathbf{w}^T \mathbf{x}_1
\]
% (that is, we assume WLOG that $\mathbf{w}^T \mathbf{x}_2 > \mathbf{w}^T \mathbf{x}_1, \forall \mathbf{x}_2 \in S_2, \mathbf{x}_1 \in S_1$, rather than the other way around).  
  We will show that $\forall \mathbf{x}_1 \in X_1, \mathbf{x}_2 \in X_2$ and $\forall \mathbf{w}$ such that $\|\mathbf{w}\| = 1$:
\begin{equation}
\label{eq:margin}
    \min_s \mathbf{w}^T\mathbf{x}_2^s - \max_s \mathbf{w}^T\mathbf{x}_1^s \le \min_s \wone^T\mathbf{x}_2^s - \max_s \wone^T\mathbf{x}_1^s % = {\mathbf{x}_2}_{dc} - {\mathbf{x}_1}_{dc}
\end{equation}
which implies that the margin from $\wone$ is greater than or equal to the margin from $\w$.  We note that $\min_s \wone^T\mathbf{x}_2^s - \max_s \wone^T\mathbf{x}_1^s  = \fdcxtwo - \fdcxone$.  

From Equation \ref{eqn:shift_sum} we know that
\[
\max_s \mathbf{w}^T \mathbf{x}^s \ge \fdcw \fdcx,~~~~~~\min_s \mathbf{w}^T \mathbf{x}^s \le \fdcw \fdcx
\]
This implies that:
\[
\min_s \mathbf{w}^T\mathbf{x}_2 - \max_s \mathbf{w}^T\mathbf{x}_1 \le f_{dc}(\w)(f_{dc}(\mathbf{x}_2) - f_{dc}(\mathbf{x}_1))
\]

%\[
%\frac{\min_s \mathbf{w}^T\mathbf{x}_2 - \max_s \mathbf{w}^T\mathbf{x}_1}\fdcw \le \fdcxtwo - \fdcxone
%\]

Given the constraint that $\|\mathbf{w}\| = 1$, $\fdcw$ is maximized by $\wone$, and so $\fdcw \le 1$, and
\[
f_{dc}(\w)(f_{dc}(\mathbf{x}_2) - f_{dc}(\mathbf{x}_1)) \le \fdcxtwo - \fdcxone
\]
and Eq.~\ref{eq:margin} is shown to hold.

\end{proof}

\section{Proof of Theorem \ref{thm:cntk} and Lemma \ref{lem:kerclass}}
\label{sec:proof2}

In this section, we first define FC networks with the neural tangent kernel (NTK) \citep{jacot2018neural} and CNNs with a convolutional neural tangent kernel (CNTK) \citep{arora2019exact,li2019enhanced} which we will use in the proof. Then we give the proofs of Theorem \ref{thm:cntk} and Lemma \ref{lem:kerclass}.

Let $\x \in \Rd$ denote the input to the network. A two-layer fully connected network is defined by
\begin{equation}
    f_{\mathrm{FC}}(\x;W,\vv) = \vv^T \sigma (W \x),
\end{equation}
where $W \in \Real^{m \times d}$ and $\vv \in \Real^m$ are learnable parameters and $\sigma(.)$ is the ReLU function applied elementwise. Assuming $W$ and $\vv$ are initialized with normal distribution, the corresponding FC-NTK for inputs $\z,\x \in \Rd$ is given by~\citep{bietti2019inductive}
\begin{equation} \label{eq:fc-ntk}
    k(\z,\x) = \frac{1}{\pi} \left( 2 \z^T \x (\pi-\phi) + \|\z\| \|\x\| \sin\phi \right),
\end{equation}
where $\phi$ denotes the angle between $\z$ and $\x$, i.e., $\phi=\arccos\left(\frac{\z^T\x}{\|\z\| \|\x\|}\right)$.

Next we define the shift invariant convolutional model. Given an input $\x \in \Rd$ and filters $\{\w_i\}_{i=1}^m \subset \Real^q$ we denote by $\w_i * \x \in \Rd$ the circular convolution of $\x$ with the filter $\w_i$ (with no bias). $W * \x$ denotes the results of these convolutions, represented as an $m \times d$ matrix, with the $m \times q$ matrix $W$ denoting the collection of all filters $\{\w_i\}_{i=1}^m$. Finally, let $\vv \in \Real^m$. Then a two layer convolutional network with global average pooling is defined by
\begin{equation}
   f_{\mathrm{Conv}}(\x;W,\vv) =  \frac{1}{d}\vv^T \sigma (W * \x)) \oned,
\end{equation}
$W$ and $\vv$ include the learnable parameters initialized with the standard normal distribution and $\oned \in \Rd$ is the vector of all ones. In this model the input $\x$ is convolved with the rows of $W$. After ReLU the result undergoes a $1 \times 1$ convolution with parameters $\vv$ followed by global average pooling, captured by the multiplication with $\frac{1}{d}\oned$. 

Given inputs $\z,\x \in \Rd$, denote by $\bar \z_i,\bar \x_j \in \Real^q$ their (cyclic) patches, $1 \le i,j \le d$, so for example $\bar \z_i=(z_i,z_{(i+1) \mathrm{~mod~} d},...z_{(i+q-1) \mathrm{~mod~} d})^T$. Then the corresponding CNTK-GAP $K(\z,\x)$ is constructed as follows.
\begin{equation} \label{eq:cntk-gap}
    K(\z,\x) = \frac{1}{d^2} \sum_{i=1}^d \sum_{j=1}^d k(\bar \z_i,\bar \x_j),
\end{equation}
where $k(\bar \z_i,\bar \x_j)$ is the FC-NTK given by \eqref{eq:fc-ntk} (see a related construction in \citep{Tachella_2021_CVPR}). 

We use FC-NTK and CNTK-GAP in kernel regression. Given training data $\{(\x_i,y_i)\}_{i=1}^n$, $\x_i \in {\cal X}$, $y_i \in \Real$, kernel ridge regression is the solution to \citep{saitoh2016theory}
\begin{equation}  \label{eq:ridge-regression}
    g_k = \argmin_{g \in {\cal H}_k}\sum_{i=1}^n(g(\x_i)-y_i)^2+\lambda \|g\|^2_{{\cal H}_k},
\end{equation}
where ${{\cal H}_k}$ denotes the reproducing kernel Hilbert space associated with $k$. 
The solution of \eqref{eq:ridge-regression} is given by \begin{equation} \label{eq:regression}
    g_k(\z)=(k(\z,\x_1),...,k(\z,\x_n)) (H_k+\lambda I)^{-1} \y,
\end{equation}
where $H_k$ is the $n \times n$ matrix with its $i,j$'th entry $k(\x_i,\x_j)$, $I$ denotes the identity matrix, and $\y=(y_1,...,y_n)^T$. Below we consider the minimum norm interpolant, i.e.,
\begin{equation}
    g_k = \argmin_{g \in {\cal H}_k} \|g\|_{{\cal H}_k} ~~~ \mathrm{s.t.} ~~~~ \forall i, ~ g(\x_i)=y_i.
\end{equation}
which is obtained when we let $\lambda \rightarrow 0$.

% \begin{theorem}
% Let $\x,-\x \in \Rd$ be two training vectors with class labels $1,-1$ respectively. 
% \begin{enumerate}
%     \item Let $k(\z,\x)$ denote NTK for the bias-free, two-layer fully connected network defined in \eqref{eq:fc-ntk}. Then $\forall \z \in \Rd$, the minimum norm interpolant $g_k(\z) \ge 0$ iff $\z^T\x \ge 0$.
%     \item Let $K(\z,\x)$ denote CNTK-GAP for the bias-free, two-layer convolutional network defined in \eqref{eq:cntk-gap}, and assume $H_K$ is invertible. Then $\forall \z \in \Rd$, either $g_K(\z) \ge 0$ iff $\z^T \oned \ge 0$ or $g_K(\z) \ge 0$ iff $\z^T\oned \le 0$.
%     %$K(z,x) \ge K(z,-x)$ for all $z \in \mathbb{R}^d$ for which $(z^T1_d)(x^T1_d) \ge 0$. 
%     (I.e., $\z^T\oned=0$ forms a separating hyperplane.) 
% \end{enumerate}
% \end{theorem}

\begin{theorem}
Let $\x,-\x \in \Rd$ be two training vectors with class labels $1,-1$ respectively. 
\begin{enumerate}
    \item Let $k(\z,\x)$ denote NTK for the bias-free, two-layer fully connected network. Then $\forall \z \in \Rd$, the minimum norm interpolant $g_k(\z) \ge 0$ iff $\z^T\x \ge 0$.
    \item Let $K(\z,\x)$ denote CNTK-GAP for the bias-free, two-layer convolutional network, and assume $H_K$ is invertible. Then $\forall \z \in \Rd$, either $g_K(\z) \ge 0$ iff $\z^T \oned \ge 0$ or $g_K(\z) \ge 0$ iff $\z^T\oned \le 0$.
    %$K(z,x) \ge K(z,-x)$ for all $z \in \mathbb{R}^d$ for which $(z^T1_d)(x^T1_d) \ge 0$. 
    (I.e., $\z^T\oned=0$ forms a separating hyperplane.) 
\end{enumerate}
\end{theorem}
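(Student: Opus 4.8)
The plan is to reduce both parts to a single elementary fact: the antisymmetrized FC-NTK $k(\cdot,\b)-k(\cdot,-\b)$ is an exactly linear function of its first argument, and then to push this through the patch-averaging structure of the CNTK-GAP. The starting point is the two-point minimum norm interpolant. Both kernels share the sign-flip symmetry $k(-\a,-\b)=k(\a,\b)$, which is immediate from the closed form \eqref{eq:fc-ntk} (negating both arguments preserves $\a^T\b$ and the angle $\phi$), and hence $K(-\a,-\b)=K(\a,\b)$ from \eqref{eq:cntk-gap} since patch extraction commutes with negation. For a training set $\{(\x,1),(-\x,-1)\}$ with $\y=(1,-1)^T$, this symmetry forces the $2\times2$ Gram matrix to take the form $\left(\begin{smallmatrix} a & b\\ b & a\end{smallmatrix}\right)$ with $a=\kappa(\x,\x)$, $b=\kappa(\x,-\x)$ for $\kappa\in\{k,K\}$, and a one-line inversion gives $g(\z)=\frac{1}{a-b}\bigl(\kappa(\z,\x)-\kappa(\z,-\x)\bigr)$. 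Since the Gram matrix is positive definite (it equals $2\|\x\|^2 I$ in part 1, and is invertible by assumption in part 2), we get $a-b>0$, so $\sign g(\z)=\sign\bigl(\kappa(\z,\x)-\kappa(\z,-\x)\bigr)$.

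For part 1 I would prove the identity $k(\a,\b)-k(\a,-\b)=2\a^T\b$ straight from \eqref{eq:fc-ntk}: replacing $\b$ by $-\b$ sends $\a^T\b\mapsto-\a^T\b$ and $\phi\mapsto\pi-\phi$, so the $\|\a\|\|\b\|\sin\phi$ terms cancel and the linear terms combine to $\frac{1}{\pi}\bigl(2\a^T\b(\pi-\phi)+2\a^T\b\,\phi\bigr)=2\a^T\b$. Plugging $\a=\z,\b=\x$ (with $a=k(\x,\x)=2\|\x\|^2$, $b=k(\x,-\x)=0$) yields $g_k(\z)=\frac{\z^T\x}{\|\x\|^2}$, so $g_k(\z)\ge0$ iff $\z^T\x\ge0$.

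For part 2 I would apply the same identity patchwise. Using $\overline{(-\x)}_j=-\bar\x_j$ together with \eqref{eq:cntk-gap},
\[
K(\z,\x)-K(\z,-\x)=\frac{1}{d^2}\sum_{i,j}\bigl(k(\bar\z_i,\bar\x_j)-k(\bar\z_i,-\bar\x_j)\bigr)=\frac{2}{d^2}\sum_{i=1}^d\sum_{j=1}^d\bar\z_i^T\bar\x_j.
\]
Expanding each patch inner product coordinatewise and swapping the order of summation, for each of the $q$ coordinate offsets the sum of a coordinate over all $d$ cyclic positions collapses to the full-signal sum, so $\sum_{i,j}\bar\z_i^T\bar\x_j=q\,(\z^T\oned)(\x^T\oned)$. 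Hence $g_K(\z)=\frac{2q}{d^2(a-b)}(\z^T\oned)(\x^T\oned)$ with $a-b=\frac{2q}{d^2}(\x^T\oned)^2$; invertibility of $H_K$ therefore forces $\x^T\oned\ne0$, and the sign of $g_K(\z)$ is the sign of $\z^T\oned$ when $\x^T\oned>0$ and its negation when $\x^T\oned<0$, giving exactly the stated dichotomy with $\z^T\oned=0$ the separating hyperplane.

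The calculations are all routine; the two places needing care are verifying $a-b>0$ so that dividing by it preserves signs — which is precisely where the invertibility hypothesis of part 2 does its work, ruling out the degenerate case $\x^T\oned=0$ — and the bookkeeping in the patch double sum, where one must check that summing any fixed coordinate of a patch over all cyclic shift positions returns $\z^T\oned$ independent of the offset. The only conceptual point, such as it is, is noticing that $k(\cdot,\b)-k(\cdot,-\b)$ is perfectly linear, which is what makes both decision boundaries linear and, in the convolutional case, normal to $\oned$.
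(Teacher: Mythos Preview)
Your proposal is correct and follows essentially the same approach as the paper: both arguments reduce to the symmetric $2\times 2$ Gram matrix (the paper factors this out as a separate lemma), prove the key linearity identity $k(\z,\x)-k(\z,-\x)=2\z^T\x$ directly from the closed form, and then push it through the patch double sum to obtain $\frac{2q}{d^2}(\z^T\oned)(\x^T\oned)$. The only cosmetic difference is that you compute $a-b=\frac{2q}{d^2}(\x^T\oned)^2$ explicitly and use invertibility of $H_K$ to force $\x^T\oned\ne 0$, whereas the paper invokes positive definiteness of $K$ to get $a-b>0$; both are valid.
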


The theorem tells us that NTK and CNTK produce linear classifiers.  (\ref{thm:ntk}) tells us that NTK produces a separating hyperplane with a normal vector $\x$, while (\ref{thm:cntk}) says that for CNTK the normal direction is $\oned$.   

\begin{proof}
\begin{enumerate}
    \item Solving the regression problem \eqref{eq:regression} with $\lambda \rightarrow 0$ we have
    \begin{equation*}
        H_k = \begin{pmatrix} k(\x,\x) & k(\x,-\x)\\ k(-\x,\x) & k(-\x,-\x) \end{pmatrix} = 2 \x^T\x I,
    \end{equation*}
    where the latter equality is obtained from \eqref{eq:fc-ntk} by noting that $\phi=0$ along the diagonal and $\phi=\pi$ for the off-diagonal entries. Therefore, using \eqref{eq:regression} and noting that $\y=(1,-1)^T$,
    \begin{equation*}
        g_k(\z)= \frac{1}{2x^Tx} (k(\z,\x)-k(\z,-\x)) 
        %= \frac{\z^T\x}{\x^T\x}.
    \end{equation*}
    Given a test point $\z \in \Rd$, let $\phi$ now denote the angle between $\z$ and $\x$ and note that the angle between $\z$ and $-\x$ is $\pi-\phi$. Therefore, 
   \begin{eqnarray*}
    k(\z,\x) &=& \frac{1}{\pi} \left( 2 \z^T \x (\pi-\phi) + \|\z\| \|\x\| \sin\phi \right)\\
    k(\z,-\x) &=& \frac{1}{\pi} \left( -2 \z^T \x \phi + \|\z\| \|\x\| \sin\phi \right),
    \end{eqnarray*}
    implying that
    \begin{equation} \label{eq:kdiff}
        k(\z,\x) - k(\z,-\x) = 2 \z^T \x.
    \end{equation}
    from which we obtain \shortversion{$g_k(\z) = \frac{\z^T\x}{\x^T\x}$.}
\longversion{
     \begin{equation*}
        g_k(\z) = \frac{\z^T\x}{\x^T\x}.
    \end{equation*}
    }
     Consequently, $g_k(\z) > 0$ if and only if $\z^T\x > 0$.
    
    \item Using the definition of $K$ (Eq.~\ref{eq:cntk-gap}) it is clear that $K(\x,\x)=K(-\x,-\x)$. Therefore, using Lemma \ref{lem:kerclass} we need to show that $K(\z,\x) > K(\z,-\x)$ on one side of the plane $\z^T\oned=0$. Consider the patches $\bar \z_i$ in $\z$ and $\bar \x_j$ in $\x$. From \eqref{eq:kdiff} we have 
    \begin{equation*}
        k(\bar \z_i,\bar \x_j) - k(\bar \z_i,-\bar \x_j) = 2\bar\z_i^T\bar\x_j,
    \end{equation*}
    implying that
    \begin{equation*}
        K(\z,\x) - K(\z,-\x) = \frac{2}{d^2} \sum_{i=1}^d \sum_{j=1}^d \bar \z_i^T \bar \x_j.
    \end{equation*}
    Rewriting this in matrix notation we have
    \begin{equation*}
        K(\z,\x) - K(\z,-\x) = \frac{2}{d^2} \oned^T Z^T X \oned,
    \end{equation*}
    where $Z$ and $X$ are $q \times d$ matrices whose columns respectively contain all the patches of $\z$ and $\x$. 
    Since all rows of $Z$ and $X$ are identical up to a cyclic permutation $\hat z = Z\oned$ and $\hat x = X\oned$ are vectors of constants in $\mathbb{R}^q$ with the constants $\z^T\oned$ and $\x^T\oned$ respectively. Consequently, using Lemma~\ref{lem:kerclass}
    \begin{equation*}
        g_K(\z) = c (K(\z,\x) - K(\z,-\x)) = \frac{2cq}{d^2} (\z^T \oned) (\x^T \oned).
    \end{equation*}
    where, because $K$ is positive definite and $H_K$ is invertible, $c=1/(K(\x,\x)-K(\x,-\x))>0$.
    Denoting $\beta=\frac{2cq}{d^2}(\x^T \oned)$, we obtain that $g_K(\z) >0$ if and only if $\sign(\beta) \, \z^T \oned > 0$, proving the theorem.
\end{enumerate}
\end{proof}

The following lemma was used to prove Thm.~\ref{thm:cntk}. 
\begin{lemma}
\label{lem:kerclass}
Let $k(.,.)$ be a positive definite kernel with a training set $\{(\x_1,+1),(\x_2,-1)\} \subset \Rd \times \Real$. If $k(\x_1,\x_1)=k(\x_2,\x_2)$ and $H_k$ is invertible with $\lambda \rightarrow 0$ then a test point $\z \in \Rd$ is classified as +1 if and only $k(\z,\x_1) > k(\z,\x_2)$.
\end{lemma}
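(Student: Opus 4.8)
The plan is to carry out the $2\times 2$ kernel-regression computation in closed form and simply read off the sign of the minimum-norm interpolant $g_k(\z)$. Write $a = k(\x_1,\x_1) = k(\x_2,\x_2)$, using the hypothesis, and $b = k(\x_1,\x_2) = k(\x_2,\x_1)$ by symmetry of the kernel, so that $H_k$ is the $2\times 2$ matrix with diagonal entries $a$ and off-diagonal entries $b$. Since $k$ is positive definite and $\x_1 \neq \x_2$, $H_k$ is positive definite, hence its two eigenvalues $a+b$ and $a-b$ are both strictly positive; in particular $a - b > 0$. This is precisely where the assumptions that $k$ is positive definite and that $H_k$ is invertible are used.

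Next I would invert $H_k$ and apply it to the label vector $\y = (1,-1)^T$. A direct computation gives $H_k^{-1}\y = \frac{1}{a-b}(1,-1)^T$ (the factor $a+b$ from the adjugate cancels against the determinant $a^2-b^2$). Substituting into the closed form of the minimum-norm interpolant, i.e. \eqref{eq:regression} in the limit $\lambda \to 0$, yields
\[
g_k(\z) = \big(k(\z,\x_1),\, k(\z,\x_2)\big)\, H_k^{-1}\y = \frac{k(\z,\x_1) - k(\z,\x_2)}{a - b}.
\]
As a sanity check one sees $g_k(\x_1) = 1$ and $g_k(\x_2) = -1$, so this is indeed the interpolant.

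Finally, by the kernel-regression decision rule a test point $\z$ is assigned label $+1$ exactly when $g_k(\z) > 0$ (the sign of the interpolant agreeing with the label $+1$ of $\x_1$). Since $a - b > 0$, the sign of $g_k(\z)$ equals the sign of $k(\z,\x_1) - k(\z,\x_2)$, so $\z$ is classified as $+1$ iff $k(\z,\x_1) > k(\z,\x_2)$, which is the claim. There is no genuine obstacle here; the only points requiring a moment's care are the justification of $a-b>0$ from positive definiteness together with invertibility of $H_k$, and making explicit the thresholding-at-zero convention of the kernel classifier.
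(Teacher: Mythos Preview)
Your proof is correct and follows essentially the same route as the paper's: both set $a=k(\x_1,\x_1)=k(\x_2,\x_2)$, $b=k(\x_1,\x_2)$, argue $a-b>0$ from positive definiteness, compute $H_k^{-1}\y=\tfrac{1}{a-b}(1,-1)^T$, and read off $g_k(\z)=\tfrac{1}{a-b}(k(\z,\x_1)-k(\z,\x_2))$. The only cosmetic difference is that the paper obtains $H_k^{-1}\y$ by observing $\y$ is an eigenvector of $H_k$ with eigenvalue $a-b$, whereas you invert $H_k$ directly via the adjugate; the content is identical.
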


\begin{proof}
Denote by $a=k(\x_1,\x_1)=k(\x_2,\x_2)$ and $b=k(\x_1,\x_2)$, then \shortversion{$ H_k = \begin{pmatrix} a & b\\ b & a \end{pmatrix}$.}
\longversion{
\begin{equation*}
    H_k = \begin{pmatrix} a & b\\ b & a \end{pmatrix}.
\end{equation*}
}
Clearly, $\y=(1,-1)^T$ is an eigenvector of $H_k$ with the eigenvalue $a-b > 0$, which is positive due to the positive definiteness of $k$. Consequently, $\y$ is also an eigenvector of $H_k^{-1}$ with eigenvalue $1/(a-b) > 0$. Applying \eqref{eq:regression} we have
\begin{eqnarray*}
    g_k(\z) &=& (k(\z,\x_1),k(\z,\x_2)) H^{-1} \y \\
    &=& \frac{1}{a-b} (k(\z,\x_1) - k(\z,\x_2))
\end{eqnarray*}
Therefore, $g_k(\z) > 0$ if and only if $k(\z,\x_1) > k(\z,\x_2)$.
\end{proof}

\end{document}